%% file: paper.tex
\documentclass[]{bytedance_seed}

\usepackage[toc,page,header]{appendix}

\usepackage{minitoc}

\usepackage{tikz}
\usetikzlibrary{positioning, arrows.meta, decorations.pathreplacing, shapes.geometric}
\usepackage{amsmath}
\usepackage{amssymb}
\usepackage{mathtools}
\usepackage{amsthm}
\DeclareMathOperator*{\argmin}{arg\,min}
\newcommand{\x}[0]{{\mathbf{x}}}

\newcommand{\E}[0]{\mathbb{E}}

\newcommand{\Sk}[0]{\mathbf{S}}
\newcommand{\pop}{\mathcal{R}}
\theoremstyle{plain}
\newtheorem{theorem}{Theorem}[section]
\newtheorem{assumption}[theorem]{Assumption}

\title{Scaling Domain Data Repetition in LLM Pretraining}

\author[1, \ddagger]{Jingwei Li}
\author[1]{Xinran Gu}
\author[1]{Rui Dai}
\author[2, \dagger]{Xintong Hao}
\author[2]{Chengyin Xu}
\author[2]{Yan Wu}
\author[1]{Shuran Zheng}
\author[1, *]{Jingzhao Zhang}

\affiliation[1]{Tsinghua University}
\affiliation[2]{ByteDance Seed}

\contribution[\ddagger]{Work done at ByteDance Seed}
\contribution[\dagger]{Project Lead}
\contribution[*]{Corresponding Author}

\abstract{
As large language models scale, their training-token budgets must also increase to maintain an appropriate tokens-per-parameter ratio (\(\mathrm{TPP}\)). However, high-quality domain data is much harder to scale than general web data. As model size and the training-token budget increase, its fraction in the training mixture tends to decrease. Repeating the available high-quality data provides an effective way to counteract this dilution, but excessive repetition may lead to overfitting. We study this trade-off under practical LLM scaling, where the training-token budget grows proportionally with model size. 
For a fixed domain, we first find that, surprisingly at a fixed \(\mathrm{TPP}\), the optimal repetition count mildly increases with model size. 
Across different domains, we find that the optimal repetition count is strongly negatively correlated with the final validation loss of a domain: domains with lower loss can generally benefit from more repetitions. In contrast, the amount of unique domain data is only weakly related to the optimal repetition count. These findings suggest that repetition counts tuned on smaller proxy models with the same \(\mathrm{TPP}\) can provide a practical estimate for larger models.
}

\date{\today}
\correspondence{Jingzhao Zhang at \email{jingzhaoz@mail.tsinghua.edu.cn}}

\begin{document}
\maketitle


\input{sections/introduction}

\input{sections/relatedwork}

\input{sections/approach}
\input{sections/experiments}

\input{sections/ablation}
\input{sections/conclusion}

\clearpage

\bibliographystyle{plainnat}
\bibliography{main}

\clearpage

\beginappendix

\input{appendix/details}

\input{appendix/experiments_4}

\input{appendix/experiments_5}

\input{appendix/proof}

\end{document}

%% file: sections/introduction.tex
\section{Introduction}

Large language models (LLMs) are often pretrained on mixtures of diverse domains, including general web text, code, mathematics, scientific corpora, and multilingual content~\citep{doddapaneni2025primer, taylor2022galactica, team2023gemini, dubey2024llama}. As model size increases, compute-optimal training also requires an increasing number of training tokens~\citep{kaplan2020scaling, hoffmann2022training}. This creates an asymmetric data-scaling problem: broad general web data~\cite{raffel2020exploring, penedo2024fineweb, li2024datacomp} can often be scaled relatively easily, whereas high-quality domain-specific data~\cite{paster2024openwebmath, li2023starcoder, guo2020wiki} is much harder to scale at the same rate. 
However, if the amount of such data remains fixed, its mixing ratio decreases as the total training budget grows. Recent work shows that under data mixing, knowledge-dense domains may only be learned once their mixing ratio exceeds a critical threshold; below this threshold, additional training can still fail to acquire the target knowledge~\cite{gu2025datamixinginducephase}. Therefore, maintaining a sufficient fraction of high-quality domain data is important when scaling data.

Prior work has extensively studied data repetition, where examples from a finite dataset are reused multiple times during training~\cite{muennighoff2023scaling, lovelace2026prescriptive, sedova2026scaling, wu2026data}. For high-quality domain data, repetition offers a direct way to counteract data dilution, but may also increase the risk of overfitting~\cite{lee2022deduplicating, carlini2022quantifying, hernandez2022scaling, xue2023repeat}. We therefore need to study the trade-off between the domain-learning gains from repeated high-quality data and the risk of overfitting, and to identify the optimal number of repetitions for different domains.
  
A first line of work studies data repetition in the single-dataset
setting~\citep{muennighoff2023scaling, xue2023repeat,
lovelace2026prescriptive}. More recent work extends this question to mixtures of general and domain-specific data, showing that general data can regularize repeated domain data and developing scaling laws that account for repetition under data mixing~\citep{sedova2026scaling, liu2026infolaw, chudnovsky2026internal}. Across these settings, the conclusion is that larger models are more susceptible to overfitting from repeated data, suggesting that repetition should be minimized when training large LLMs. We confirm this  with our experiments shown on the left in Figure~\ref{fig:motivation}.

\begin{figure*}[t]
\centering
\begin{tabular}{cc}
\includegraphics[width=0.48\textwidth]{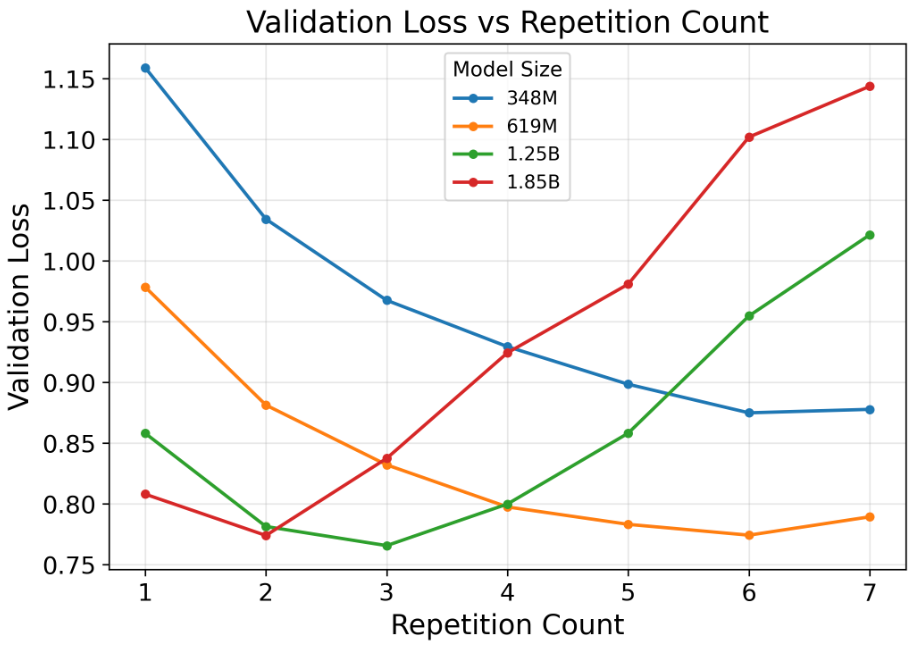} &
\includegraphics[width=0.48\textwidth]{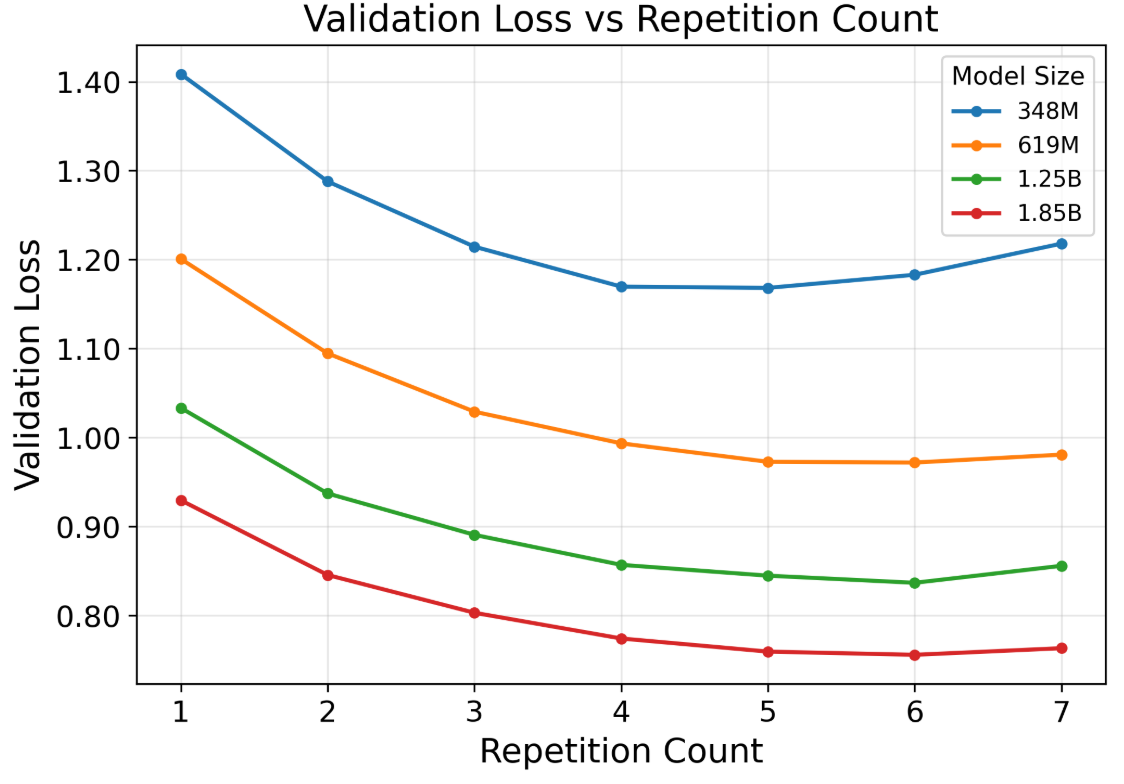} \\
\end{tabular}
  \caption{\textbf{Data repetition under two model-scaling setups.}
  \textbf{Left:} When the training-data size is fixed independently of model
  size, the optimal repetition count decreases with model size.
  \textbf{Right:} When the tokens-per-parameter ratio \(\mathrm{TPP}\) is fixed,
  the optimal repetition count increases with model size. The fixed-\(\mathrm{TPP}\) setting better reflects practical
  LLM scaling and is the focus of our study.}
  \label{fig:motivation}
\vspace{-0.3cm}
\end{figure*}

However, we also note that existing cross-scale comparisons typically hold the training-data size fixed rather than preserving the tokens-per-parameter ratio~\cite{muennighoff2023scaling,xue2023repeat,chudnovsky2026internal},
\(\mathrm{TPP} \coloneqq D/N\), where \(D\) is the total number of training
tokens and \(N\) is the model size. 
This distinction leads to different conclusions about repetition. As illustrated in Figure~\ref{fig:motivation}, when \(D\) is fixed, increasing the model size exposes a larger model to the same repeated dataset, causing overfitting to occur earlier and the optimal repetition count to approach one. 
\textit{In contrast, when the training-token budget grows with model size using a fixed
\(\mathrm{TPP}\)}~\citep{hoffmann2022training, kalra2026quantifying}, \textit{the optimal repetition count surprisingly increases with model size.}

To study repetition under this scaling regime, we consider four high-quality
domains: code, math, Wikipedia and medical data. For
each domain, we vary its amount of unique data and repetition count while
keeping the total training-token budget fixed for each model size. This setup
allows us to isolate how data repetition interacts with domain properties,
unique-data size, and model scale.

Our main findings are as follows.

\begin{itemize}
\item \textbf{The optimal repetition count is strongly negatively
correlated with the final validation loss of a domain.}
Domains with lower validation loss can generally tolerate and benefit from
more repetitions, whereas domains with higher validation loss overfit
earlier.

\item \textbf{At a fixed tokens-per-parameter ratio \(\mathrm{TPP}\), the optimal
repetition count increases with model size.}
This trend is opposite to that observed when the data budget is fixed
across model sizes. It also supports conservative transfer from smaller
proxy models: a repetition count that does not cause overfitting in a
smaller model is unlikely to cause overfitting in a larger model trained at
the same \(\mathrm{TPP}\).

\item \textbf{The optimal repetition count is largely insensitive to the amount of unique data.}
Across the tested fractions of unique high-quality tokens, the repetition count that minimizes validation loss remains nearly unchanged. Therefore, when estimating the optimal repetition count on a smaller proxy model, it is not necessary to use a specific unique high-quality token fraction; any representative fraction within the tested range can be used.

\end{itemize}

Together, these results provide a practical approach for configuring
the optimal repetition counts for high-quality domains. Repetition counts can first be swept on a
smaller proxy model with the same \(\mathrm{TPP}\) for an arbitrary fraction of unique high-quality tokens, and the result can be used to select relatively safe repetition counts for the target model.

%% file: sections/relatedwork.tex
\section{Related Work}

\textbf{Data Repetition in Single-Dataset Training.} Early studies mainly examine multi-epoch training on a single dataset. One line of work extends the Chinchilla scaling law~\cite{hoffmann2022training} and develops scaling laws for repeated data. \cite{muennighoff2023scaling} model the validation loss under repetition using exponential decay and show that a few epochs of repetition can perform similarly to fresh data, while heavier reuse yields diminishing returns. \cite{lovelace2026prescriptive} further extend this formulation by modeling the subsequent increase in validation loss with an additive overfitting penalty, achieving more accurate predictions. Another line of work studies the factors that determine repetition-induced degradation. \cite{xue2023repeat} identify dataset size, model size, and training objective as key factors, while \cite{charton2024emergent} show that repetition can improve generalization on some synthetic mathematical tasks. \cite{yan2025larger} theoretically prove that larger datasets can generally support more reuse. Beyond exact duplication, \cite{kazdan2026scale} find that semantic duplication can also cause overfitting, and \cite{wu2026data} show that sparse models benefit from more repeated epochs than dense models. However, these studies mainly repeat an entire dataset, whereas practical LLM pretraining often repeats only selected domains within data mixtures.

\textbf{Data Repetition in Pretraining Mixtures.}
More recent work studies repetition within pretraining mixtures, where a limited subset is reused while the remaining data remain unique. \cite{hernandez2022scaling} study a setting in which 10\% of the training tokens are drawn from a repeatedly reused subset and the remaining 90\% are unique. They find that repetition can cause non-monotonic degradation, with intermediate repetition levels being particularly harmful. Extending this analysis, \cite{chudnovsky2026internal} jointly vary the subset size and repetition count, showing that a moderately sized subset repeated a moderate number of times can be more harmful than either a larger subset repeated fewer times or a smaller subset repeated more times. \cite{sedova2026scaling} study mixture pretraining with limited target-domain data and abundant generic data. They show that generic data mitigates overfitting from repeated target-domain data and that the optimal repetition count depends on the amount of target data, compute budget, and model scale. However, existing cross-scale studies do not preserve the tokens-per-parameter ratio. We instead study repetition under a fixed \(\mathrm{TPP}=D/N\), such that the training-token budget grows proportionally with model size.

\textbf{Data Mixture Optimization for LLMs.}
Data mixture optimization studies how to allocate a fixed training budget across domains by modeling their relative value and interactions. Existing methods estimate domain weights using proxy training, predictive models, or scaling laws, and transfer the resulting mixture to larger models~\citep{xie2023doremi, fan2024doge, liu2024regmix, diao2025climb, ye2025data, kang2024autoscale, shukor2025scalinglawsoptimaldata}. These methods generally assume sufficient unique data in each domain and therefore do not model repetition. Recent work~\citep{liu2026infolaw, sedova2026scaling} relaxes this assumption by deriving repetition-aware mixture scaling laws that jointly characterize domain allocation and data reuse.

%% file: sections/approach.tex
\section{Setup}

\begin{figure}[t]
\vspace{-0.1cm}
\centering
\includegraphics[width=0.97\textwidth]{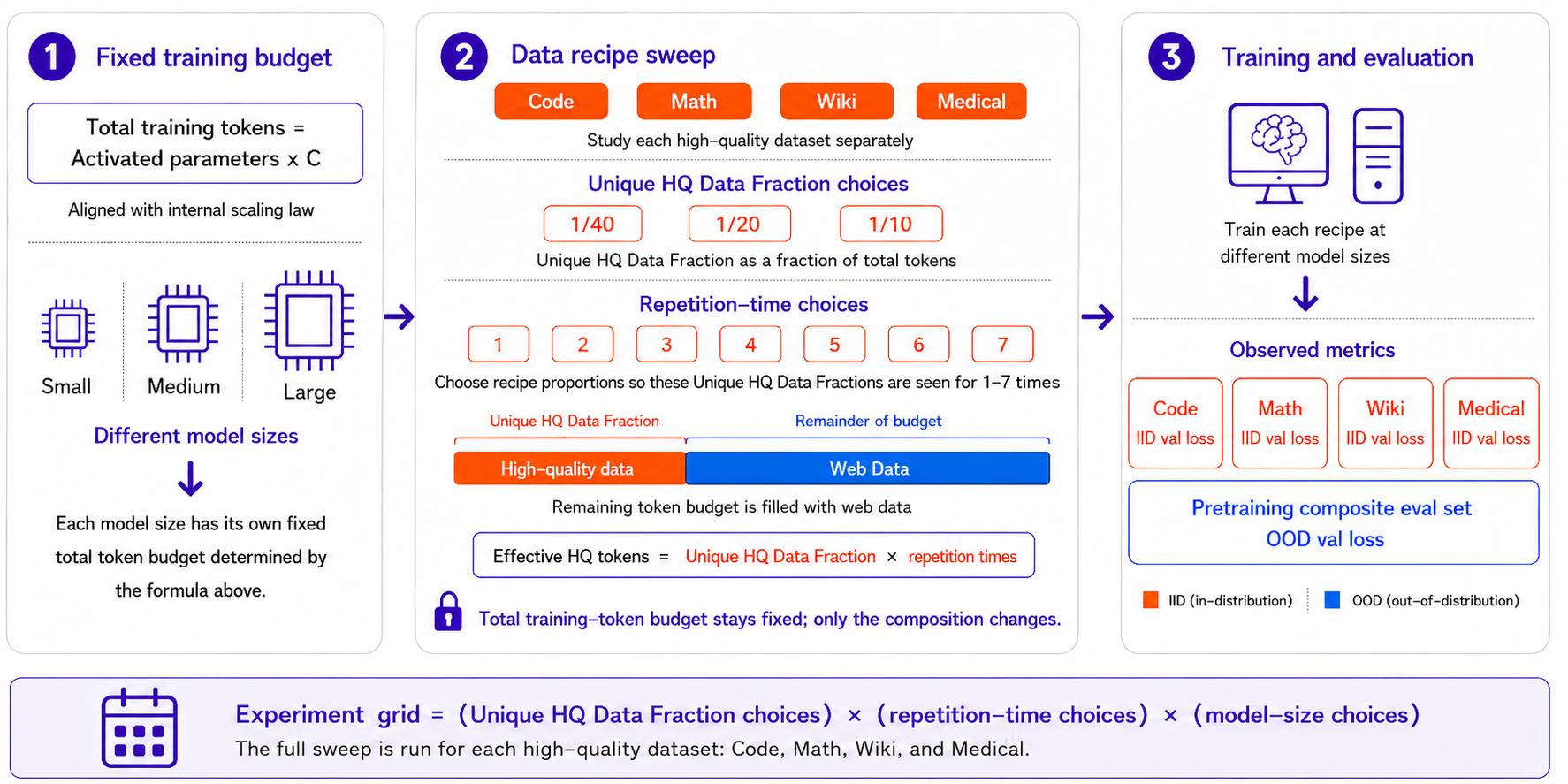}
\caption{\textbf{Overview of our experimental framework.} We train models of
different sizes at a fixed tokens-per-parameter ratio and vary the amount and
repetition count of high-quality domain data in the training mixture. We
evaluate both in-domain validation loss and out-of-domain performance.}
\label{fig:framework}
\vspace{-0.5cm}
\end{figure}
We study data repetition under different model sizes. Let
\(\mathcal{N}=\{N_1,\ldots,N_K\}\) denote the model sizes. For a model of size
\(N\in\mathcal{N}\), we set its total training-token budget to
\begin{align*}
    D_N = \mathrm{TPP} \cdot N,
\end{align*}
where \(\mathrm{TPP}\) denotes the number of training tokens per parameter and is set to a constant greater than 100 in our experiments. Therefore, all training
runs with the same model size use exactly the same number of training tokens,
while larger models receive proportionally larger token budgets.
We focus on different high-quality domains. Let
$$
\mathcal{D}_{\mathrm{HQ}}
=
\{
\textsc{Code},
\textsc{Math},
\textsc{Wiki},
\textsc{Medical}
\}
$$
denote the set of high-quality domains. For each high-quality domain \(d\in\mathcal{D}_{\mathrm{HQ}}\), we sweep over different combinations of unique-data fractions and repetition counts. 

Next, we set the number of unique high-quality tokens $U_{N,\alpha}$ as
\begin{align*}
        U_{N,\alpha} = \alpha D_N
, \quad \alpha \in
    \left\{
    \frac{1}{40},
    \frac{1}{20},
    \frac{1}{10}
    \right\},
\end{align*}
and the repetition count as $e \in \{1,2,3,4,5,6,7\}.$
In other words, for a configuration \((d,N,\alpha,e)\), we first select a fixed subset from domain \(d\) containing $U_{N,\alpha}$ unique tokens.
This subset is then repeated \(e\) times. 
Hence, the total number of high-quality tokens during training is
\begin{equation*}
    H_{N,\alpha,e}
    =
    eU_{N,\alpha}
    =
    e\alpha D_N.
\end{equation*}
The remaining token budget is filled with non-repeated web
data, whose token count is
\begin{equation*}
    W_{N,\alpha,e}
    =
    D_N-H_{N,\alpha,e}
    =
    \left(1-e\alpha\right)D_N.
\end{equation*}
All configurations satisfy \(e\alpha\leq 1\), and thus
\(W_{N,\alpha,e}\geq 0\). Importantly, \(\alpha\) controls the amount of
unique high-quality data, whereas \(e\) controls how often this fixed subset is
revisited. Their product \(e\alpha\) determines the final proportion of
high-quality token presentations in the training stream. The total budget
\(D_N\) remains unchanged across recipes for a fixed \(N\); only its
composition between the target high-quality domain and the web corpus changes.
We do not mix multiple high-quality domains in the same run.

Let \(\theta_{d,N,\alpha,e}\) denote the model obtained from configuration
\((d,N,\alpha,e)\). The complete experiment grid for each domain $\mathcal{G}_d$ and the full set of experiments $\mathcal{G}$ are
\begin{equation*}
    \mathcal{G}_d
    =
    \mathcal{N}\times\mathcal{A}\times\mathcal{E}, \quad
    \mathcal{G}
    =
    \bigcup_{d\in\mathcal{D}_{\mathrm{HQ}}}
    \left(\{d\}\times\mathcal{G}_d\right).
\end{equation*}

We evaluate each trained model using both in-distribution and
out-of-distribution validation losses. For a validation corpus
\(\mathcal{V}\), we define the token-averaged negative log-likelihood as
\begin{equation*}
    \mathcal{L}(\theta;\mathcal{V})
    =
    -\frac{1}{M_{\mathcal{V}}}
    \sum_{\boldsymbol{x}\in\mathcal{V}}
    \sum_{t=1}^{|\boldsymbol{x}|}
    \log p_{\theta}
    \left(x_t\mid x_{<t}\right),
\end{equation*}
where \(\boldsymbol{x}=(x_1,\ldots,x_{|\boldsymbol{x}|})\) is a token sequence
and $    M_{\mathcal{V}}
    =
    \sum_{\boldsymbol{x}\in\mathcal{V}}
    |\boldsymbol{x}|$
is the total number of validation tokens. For a run targeting domain \(d\), the
in-distribution metric is
\begin{equation*}
    \mathcal{L}_{\mathrm{IID}}^{(d)}(N,\alpha,e)
    =
    \mathcal{L}
    \left(
    \theta_{d,N,\alpha,e};
    \mathcal{V}_{d}^{\mathrm{IID}}
    \right),
\end{equation*}
where \(\mathcal{V}_{d}^{\mathrm{IID}}\) is the held-out validation set from
the same high-quality domain. We additionally report
\begin{equation*}
    \mathcal{L}_{\mathrm{OOD}}^{(d)}(N,\alpha,e)
    =
    \mathcal{L}
    \left(
    \theta_{d,N,\alpha,e};
    \mathcal{V}^{\mathrm{OOD}}
    \right),
\end{equation*}
where \(\mathcal{V}^{\mathrm{OOD}}\) is a pretraining validation set
used to measure performance outside the repeated target domain. Our setup is illustrated in Figure~\ref{fig:framework}.

%% file: sections/experiments.tex
\section{Analysis of the Optimal Repetition Count}
\label{sec:results}

In this section, we summarize the empirical factors associated with the optimal repetition count and provide a theoretical explanation. Empirically, the optimal repetition count depends mildly on model size, is largely insensitive to the fraction of unique high-quality data, and is strongly negatively correlated with the minimum validation loss. We then use a theoretical model to explain these observations through the trade-off between knowledge acquisition and noise fitting.

\subsection{Optimal Repetition Counts for High-Quality Domains}
\label{sec:optimal_repetition}

Recall that for each model size \(N\), we train with a token budget
\(D_N=\mathrm{TPP}\cdot N\) and record the validation loss at the end of
training. We vary the fraction of unique high-quality data, denoted by
\(\alpha\), and its repetition count, denoted by \(e\), while keeping the
total number of training tokens fixed. We use these experiments to study
how the optimal repetition count depends on $\alpha$, model size, and data domain.

\begin{figure*}[t]\centering
\begin{tabular}{ccc}
\includegraphics[width=0.313\textwidth]{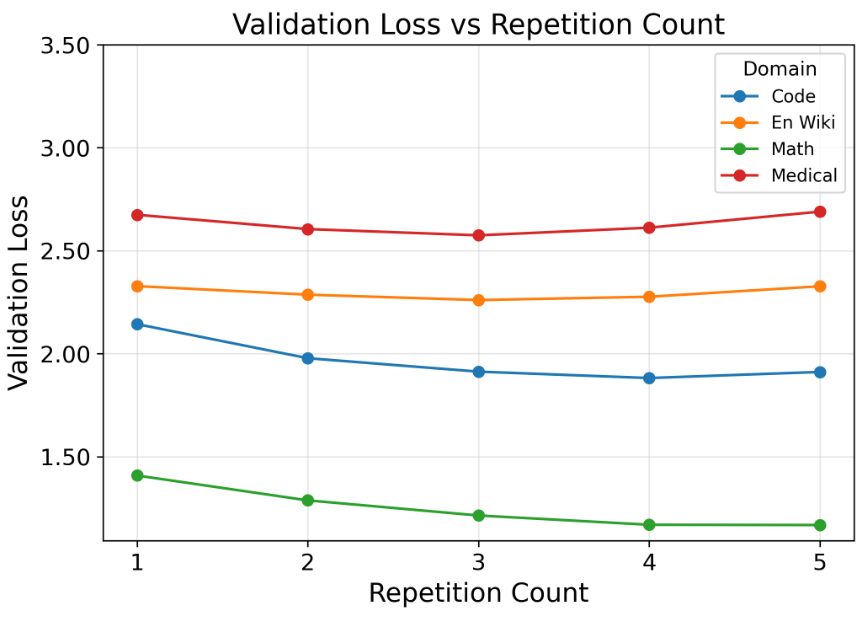} &
\includegraphics[width=0.313\textwidth]{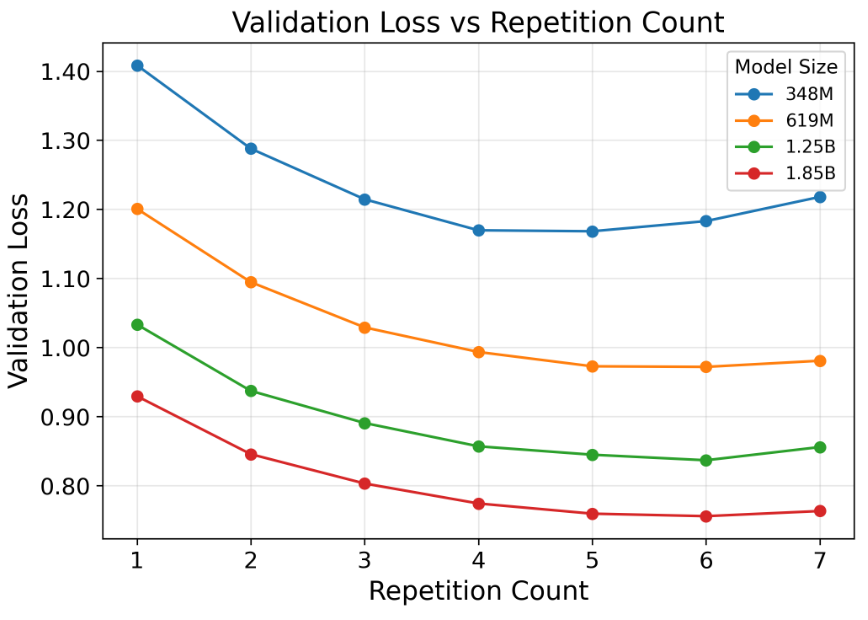} &
\includegraphics[width=0.313\textwidth]{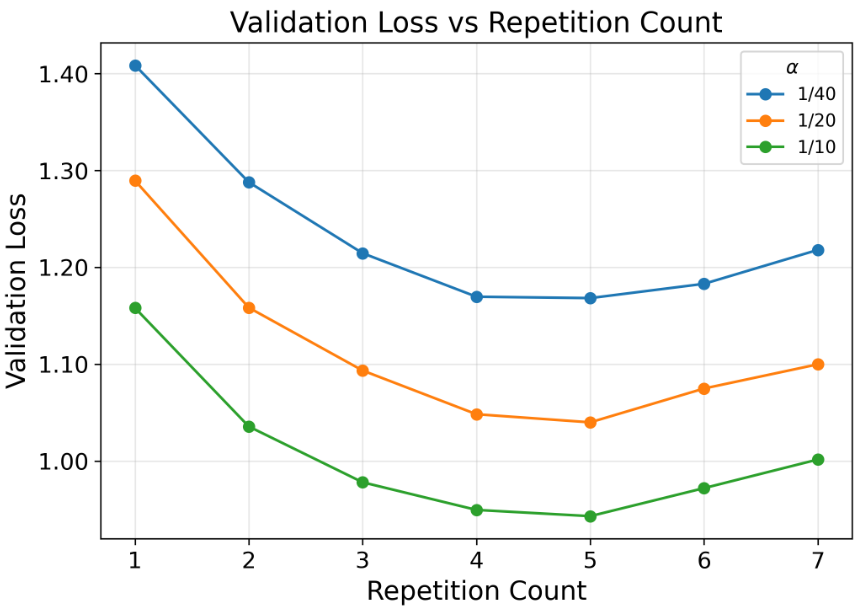} \\
\end{tabular}
\caption{
\textbf{Factors affecting the optimal repetition count.}
Left: different high-quality domains exhibit substantially different optimal repetition counts.
Middle: for the Math domain, the optimal repetition count increases with model size under a fixed TPP.
Right: for the Math domain, varying $\alpha$ mainly shifts the validation loss but has little effect on the optimal repetition count.
Overall, the domain has the strongest effect on the optimal repetition count, followed by model size, while $\alpha$ has little effect.
}
\label{fig:three_aspects}
\vspace{-0.3cm}
\end{figure*}

Figure~\ref{fig:three_aspects} compares the effects of data domain, model size, and the fraction of unique high-quality data $\alpha$ on the optimal repetition count. The left panel shows a strong domain dependence. Math reaches its minimum at around 5 repetitions, while Wiki, Code, and Medical reach their minima earlier, indicating that the optimal repetition count varies substantially across domains. The middle panel shows the effect of model size on the Math domain at a fixed $\alpha$. As model size increases, the optimal repetition count shifts toward larger values, suggesting a moderate dependence on model scale. The right panel shows the effect of $\alpha$ on the Math domain at a fixed model size. Although increasing $\alpha$ consistently reduces the validation loss, the location of the minimum remains nearly unchanged, indicating that $\alpha$ has little effect on the optimal repetition count. Overall, the optimal repetition count mainly depends on the data domain and model size, but is largely insensitive to $\alpha$. The full results across all domains, model sizes, and values of $\alpha$ are provided in Appendix~\ref{app:sec4}.

To quantify the above observations, we estimate the optimal repetition count for each combination of domain \(d\), model size \(N\), and the fraction of unique high-quality data \(\alpha\), and then quantitatively characterize its dependence on these factors. Specifically, we fit a quadratic function to the final validation loss as a function of repetition count:
\begin{equation*}
    \widehat{L}_{d,N,\alpha}(e)
    =
    a_{d,N,\alpha}e^2
    +
    b_{d,N,\alpha}e
    +
    c_{d,N,\alpha}.
\end{equation*}
We use the minimum of the fitted curve as the estimated optimal repetition count,
\begin{equation*}
    \widehat{e}^{*}_{d,N,\alpha}
    =
    -\frac{b_{d,N,\alpha}}{2a_{d,N,\alpha}}.
\end{equation*}
This continuous estimate reduces the effect of the discrete repetition grid, allowing clearer analysis of the correlations between the optimal repetition count and different factors.

Since different domains exhibit substantially different validation-loss levels, we further characterize the domain effect through validation loss. Specifically, instead of treating the domain only as a categorical variable, we use the minimum validation loss achieved within each domain setting as a continuous proxy for its domain-specific characteristics, and quantitatively examine how it relates to the optimal repetition count.

\begin{figure}[t]
\vspace{-0.1cm}
\centering
\includegraphics[width=0.97\textwidth]{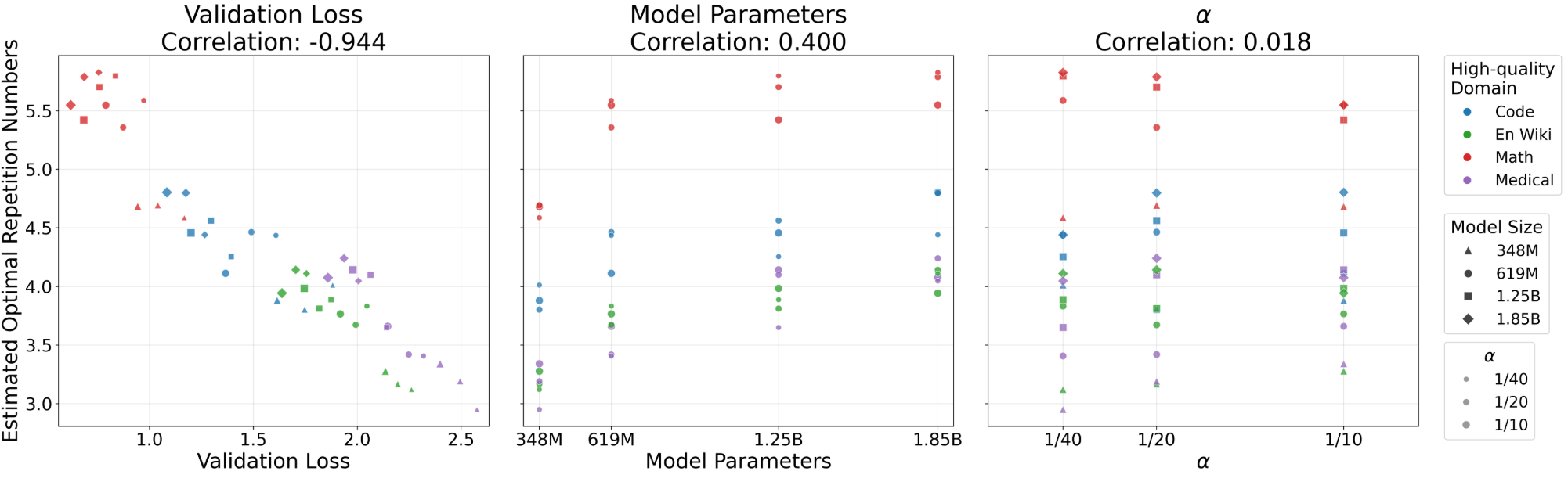}
\caption{\textbf{Factors associated with the estimated optimal repetition count.} For each combination of high-quality domain, model size, and unique data fraction, we fit a quadratic function to the final validation loss as a function of repetition count and use the minimum of the fitted curve as the estimated optimum. From left to right, we plot the estimated optimal repetition count against the minimum validation loss, model size, and unique high-quality data fraction. The corresponding Pearson correlations are \(-0.944\), \(0.400\), and \(0.018\), respectively. Colors denote high-quality domains, marker shapes denote model sizes, and marker sizes denote unique data fractions where applicable.}
\label{fig:optimal_repetition_correlation}
\vspace{-0.1cm}
\end{figure}

Figure~\ref{fig:optimal_repetition_correlation} compares the estimated optimal repetition count with the minimum validation loss, model size, and unique high-quality data fraction. The optimal repetition count has a strong negative correlation with the minimum validation loss, with a Pearson correlation of \(-0.944\). Across the domains considered in our experiments, settings with higher validation loss tend to prefer fewer repetitions, whereas settings with lower validation loss tend to have larger optimal repetition counts. Model size has a weaker positive correlation of \(0.400\), consistent with the observation that larger models tend to prefer slightly more repetitions under a fixed tokens-per-parameter ratio. By comparison, the correlation with the unique high-quality data fraction is only \(0.018\), indicating that the optimal repetition count is nearly insensitive to the amount of unique data over the range considered.

Overall, the optimal repetition count is mainly determined by the high-quality domain loss, with a smaller dependence on model size and little observable dependence on the unique high-quality data fraction. 

These findings suggest a practical procedure for selecting the repetition count for large-scale training: we first train a smaller proxy model with the same tokens-per-parameter ratio as the target model, using any suitable value of $\alpha$. For each high-quality domain, the validation loss provides a useful indication of the range of repetition counts worth considering. This value can then serve as a conservative estimate for the target model, since our experiments show that \textit{a repetition count that does not cause overfitting on the proxy model also remains safe for a larger model under the same tokens-per-parameter ratio.}

\subsection{Theoretical Analysis}
\label{sec:theorem}
In this section, we analyze the experimental observations above from a theoretical perspective. To capture the experimental setup, we consider a one-hot linear regression problem in $\mathbb{R}^{\infty}$.
Let $\{e_k\}_{k\ge1}$ be the standard basis, where each coordinate $k$ represents a knowledge unit, and its frequency follows a power law. Specifically, following~\cite{yan2025larger}, let the input distribution be $\mathbb{P}(\x=e_k)=p_k,$ where $p_k=c_\alpha k^{-\alpha},    c_\alpha=\zeta(\alpha)^{-1},     \text{ and }   \alpha>1.$ Further, let $\theta^*$ be the target linear predictor, where $\theta^*$ specifies the response associated with knowledge unit $k$. We make the following assumption:
\begin{assumption}[Source Condition]
\label{ass:source-condition}
We assume that $\theta^*$ satisfies a prior that $\E[(\theta_k^*)^2 p_k]=k^{-\beta}, \beta \in (1,+\infty).$
\end{assumption}
Source condition assumptions are widely used in theoretical work such as~\cite{li2025functional,lin2024scaling}.
The total weighted signal energy is finite because $\sum_{k=1}^{\infty}\E[p_k(\theta_k^*)^2]     =\sum_{k=1}^{\infty}k^{-\beta}<\infty.$

Next, we define the learning setup. Given a training-token budget $D$ and model size $N$, let $
    \Sk_N:\mathbb{R}^{\infty}\to\mathbb{R}^{N}
$
be the projection onto the first $N$ coordinates. Due to the limited model capacity, we can only observe $\Sk_N \x$ instead of observing the intact $\x$~\cite{dai2026explaining,li2025functional}. Therefore, we observe the dataset $
    \mathcal D=\{(\Sk_N\x_i,y_i)\}_{i=1}^{D},
$
with the following data distribution and loss,
\begin{align*}
        \x_i\overset{\mathrm{iid}}{\sim}p,
    \qquad
    y_i=\langle \x_i,\theta^*\rangle+\varepsilon_i,
    \qquad
    \varepsilon_i\overset{\mathrm{iid}}{\sim}\mathcal{N}(0,\sigma^2),
    \qquad \sigma^2>0, \\
        L_N(\theta)
    =
    \frac{1}{2D}\sum_{i=1}^{D}
    \left(\langle \Sk_N\x_i,\theta\rangle-y_i\right)^2,
    \qquad
    \theta\in\mathbb{R}^{N}.
\end{align*}
To fit the linear regression model, we initialize at $\theta_0=\mathbf{0}$ and run full-batch gradient descent:
$$\theta_{r+1}=\theta_r-\eta\nabla L_N(\theta_r),     \qquad     r=0,1,2,\ldots,     \qquad     0<\eta<1.$$
To evaluate the performance, we define population risk and the total expected risk. For $\theta\in\mathbb{R}^{N}$, define the population excess risk
$$\pop_N(\theta)     :=     \frac12\E_{\x}     \left[         \left(             \langle \Sk_N\x,\theta\rangle             -\langle \x,\theta^*\rangle         \right)^2     \right].$$
To investigate the expected population risk with respect to $r$, we write
$\pop_{D,N}(r;\beta,\sigma)     :=     \E_{\theta^*,\mathcal D,\varepsilon}[\pop_N(\theta_r)],$
and define the earliest optimal integer stopping time by
$$r^*(D,N;\beta,\sigma)     :=     \min\argmin_{r\in\mathbb Z_{\ge0}}     \pop_{D,N}(r;\beta,\sigma).$$
When $\beta$ and $\sigma$ are fixed, we simply write $r^*(D,N)$.

Condition on the number of observations of each knowledge unit gives the exact decomposition:
$$\begin{aligned}
2\pop_{D,N}(r;\beta,\sigma)={}&\underbrace{\sum_{k>N}k^{-\beta}}_{\text{unrepresented knowledge}}+\underbrace{\sum_{k=1}^{N}k^{-\beta} \E_{m\sim B(D,p_k)}(1-\eta m/D)^{2r}}_{\text{knowledge-acquisition error}}\\
&+\underbrace{\sigma^2\sum_{k=1}^{N}p_k \E_{m\sim B(D,p_k)} \mathbf{1}_{\{m>0\}}\cdot \frac{(1-(1-\eta m/D)^r)^2}{m}}_{\text{noise-fitting error}},
\end{aligned}$$ where $B(D,p_k)$ denotes the Binomial distribution. Repeated optimization decreases the knowledge-acquisition error but increases the noise-fitting error, and the optimal repetition count is the point at which the latter marginal effect begins to dominate.

\begin{theorem}[Noise decay]
\label{thm:noise-decay}
For fixed token size $D$ and model size $N$, let $0<\sigma_1^2\le\sigma_2^2$. Then
\[
    r^*(D,N;\beta,\sigma_2)
    \le
    r^*(D,N;\beta,\sigma_1).
\]
Moreover, as $\sigma^2\to 0$,
\[
    r^*(D,N;\beta,\sigma)
    =
    \frac{\log(1/\sigma^2)}{-\log(1-\eta/D)}+O(1).
\]
\end{theorem}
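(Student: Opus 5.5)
We work directly from the exact decomposition of $2\pop_{D,N}(r;\beta,\sigma)$ stated above. We write it as $2\pop_{D,N}(r;\beta,\sigma)=C+A(r)+\sigma^2 B(r)$. Here $C=\sum_{k>N}k^{-\beta}$ is independent of $r$. The term $A(r)=\sum_{k\le N}k^{-\beta}\E(1-\eta m/D)^{2r}$ is nonincreasing in $r$, since $0\le 1-\eta m/D\le 1$ because $m\le D$ and $\eta<1$. The term $B(r)=\sum_{k\le N}p_k\E\,\mathbf 1_{\{m>0\}}(1-(1-\eta m/D)^r)^2/m$ is nondecreasing in $r$.

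\textbf{Monotonicity.} The first claim is a standard single-crossing or monotone comparative statics argument. Let $r_i=r^*(D,N;\beta,\sigma_i)$, and suppose for contradiction that $r_2>r_1$. Optimality of $r_1$ under $\sigma_1$ (with $r_1$ the \emph{earliest} minimizer) gives $A(r_1)+\sigma_1^2B(r_1)\le A(r_2)+\sigma_1^2B(r_2)$. Optimality of $r_2$ under $\sigma_2$, being the earliest minimizer with $r_1<r_2$, gives the strict inequality $A(r_2)+\sigma_2^2B(r_2)<A(r_1)+\sigma_2^2B(r_1)$. Adding the two yields $(\sigma_2^2-\sigma_1^2)(B(r_2)-B(r_1))<0$. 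This contradicts $\sigma_2^2\ge\sigma_1^2$ together with $B(r_2)\ge B(r_1)$. Hence $r_2\le r_1$. Before this step I would check that a minimizer exists. This holds because $A(r)\to A(\infty)=\sum_k k^{-\beta}\mathbb P(m=0)$ and $B(r)\to B(\infty)$ monotonically. The limit is attained only if the noise is zero, and for $\sigma>0$ the increasing $B$ part ensures $r^*$ is finite. I would verify this via the asymptotic analysis below, or by noting that the increments of $A$ decay geometrically while those of $B$ stay comparable once $r$ is large.

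\textbf{Asymptotics.} For small $\sigma$, the key step is to identify the slowest-decaying mode of $A$. The largest contraction factor among $m\ge1$ is $\rho:=1-\eta/D$, attained at $m=1$. The $m=0$ terms are constant. So $A(r)-A(\infty)=c_1\rho^{2r}(1+O(\rho_2^{2r}/\rho^{2r}))$ with $c_1=\sum_{k\le N}k^{-\beta}\mathbb P(m=1)>0$ and $\rho_2=1-2\eta/D$. Similarly $B(\infty)-B(r)=c_2\rho^{r}(1+o(1))$, with $c_2=2\sum_k p_k\mathbb P(m=1)$ from expanding $(1-\rho^r)^2$. Thus, up to constants,
\[
2\pop-\text{const}\approx c_1\rho^{2r}-\sigma^2c_2\rho^{r}+\sigma^2 O(\rho^{2r}).
\]
Minimizing over $x=\rho^r$, the continuous minimizer is $x^*=\sigma^2c_2/(2c_1)+O(\sigma^4)$. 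Hence $r=\log(1/x^*)/(-\log\rho)=\log(1/\sigma^2)/(-\log(1-\eta/D))+O(1)$, where the $O(1)$ absorbs $\log(2c_1/c_2)/(-\log\rho)$ and integer rounding. To make this rigorous, I would show that the discrete difference $\pop(r+1)-\pop(r)$ is negative when $\rho^r\ge K\sigma^2$ and positive when $\rho^r\le \sigma^2/K$, for a constant $K$ depending on $D,N,\beta,\eta$. This pins $r^*$ inside a window of width $O(1)$.

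\textbf{Main obstacle.} The hardest step is controlling the higher modes $m\ge2$ uniformly. Their contributions to the differences of $A$ and $B$ carry factors $(1-\eta m/D)^{r}$. These are $o(\rho^r)$, but with constants depending on $D$. Since $D,N$ are fixed this is fine, but I would bound the mode-$m$ contribution to each difference by $C_D\,\rho^{r}\,(\rho_2/\rho)^{r}$ explicitly to justify the sign analysis.
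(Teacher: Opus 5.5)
Your proposal is correct and follows the paper's proof in substance: the monotonicity step is the same comparison argument (the paper writes $G=F+\Delta$ with $\Delta$ nondecreasing, you add the two optimality inequalities and use the earliest-minimizer convention for strictness), and for the small-noise rate the paper likewise isolates the $m=1$ mode with contraction $1-\eta/D$, bounds the $m\ge 2$ remainder by $C(\sigma^2\rho_2^{r}+\rho_2^{2r})$, and extracts $(1-\eta/D)^{r^*}=\Theta(\sigma^2)$ from the two first-order conditions at $r^*$, which is just the contrapositive of your two-regime sign analysis. One detail to handle when you write it out: two-sided bounds on the $m\ge 2$ terms let the $m=1$ term control the sign only once $r$ exceeds a fixed threshold, so in the regime $\rho^r\ge K\sigma^2$ you should either keep signs (the $m\ge 2$ signal increments are nonpositive, and the $m\ge 2$ noise increments are at most $C\sigma^2\rho_2^r\le C\sigma^2\rho^r\le C\rho^{2r}/K$) or observe that for each bounded $r$ the forward difference tends to a strictly negative limit as $\sigma\to 0$ --- the latter is exactly the step $r^*\to\infty$ that the paper invokes without justification.
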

\paragraph{Findings:}A smaller $\sigma^2$ reduces the noise-fitting term without changing the knowledge-acquisition term, so the benefit of fitting the signal dominates for more iterations; overfitting still occurs, but its onset is delayed. Theorem~\ref{thm:noise-decay} formalizes this mechanism and explains the negative relationship between validation loss and the optimal repetition count in Figure~\ref{fig:optimal_repetition_correlation}. 

\begin{theorem}[Stopping time and model size]
\label{thm:model-size-stopping}
Suppose $\beta > \alpha$. For fixed $D$, $\beta$, and $\sigma^2>0$, define
\[
    N_0(D,\sigma^2)
    =
    \left(
        \frac{D(2-\eta/D)}{c_\alpha\eta\sigma^2}
    \right)^{\frac{1}{\beta-\alpha}}.
\]
Then
\[
    r^*(D,N+1)\le r^*(D,N),
    \qquad
    N>N_0(D,\sigma^2).
\]
The crossover scale satisfies
\[
    N_0(D,\sigma^2)
    =
    \Theta\!\left(
        \left(\frac{D}{\sigma^2}\right)^{1/(\beta-\alpha)}
    \right).
\]

\end{theorem}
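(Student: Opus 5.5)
The plan is a monotone comparative-statics argument. Passing from $N$ to $N+1$ only adds the coordinate $k=N+1$ to the exact decomposition of $2\pop_{D,N}$, and the added term is nondecreasing in $r$ as soon as $N+1>N_0$. Concretely, the term $k^{-\beta}$ leaves the unrepresented-knowledge sum, which does not depend on $r$, and enters the other two sums. This gives
\[
2\pop_{D,N+1}(r)=2\pop_{D,N}(r)+g_{k}(r)+\text{const},
\]
where
\[
g_k(r)=\E_{m\sim B(D,p_k)}\Bigl[k^{-\beta}q_m^{2r}+\sigma^2 p_k\mathbf 1_{\{m>0\}}\frac{(1-q_m^r)^2}{m}\Bigr],
\qquad q_m=1-\eta m/D\in[0,1).
\]
Note $q_m\ge 0$ because $m\le D$ and $\eta<1$.

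\textbf{Step 1 (comparative statics).} Suppose $g_k$ is nondecreasing on $\mathbb Z_{\ge0}$, and write $r_N=r^*(D,N)$. For every $r>r_N$ we have $\pop_{D,N}(r)\ge\pop_{D,N}(r_N)$ and $g_k(r)\ge g_k(r_N)$. Adding these gives $\pop_{D,N+1}(r)\ge\pop_{D,N+1}(r_N)$. Hence some minimizer of $\pop_{D,N+1}$ lies in $\{0,\dots,r_N\}$, and the earliest minimizer satisfies $r^*(D,N+1)\le r_N$.

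\textbf{Step 2 (monotonicity of $g_k$).} I would prove monotonicity conditionally on each value of $m$. The $m=0$ term equals $k^{-\beta}$ and is constant in $r$. For $m\ge1$, set $x=q_m^r$ and $x'=q_m x$. The one-step increment is
\[
(x-x')\Bigl[\sigma^2p_k\frac{2-x-x'}{m}-k^{-\beta}(x+x')\Bigr],
\qquad x-x'\ge0.
\]
It therefore suffices that the bracket be nonnegative for every $r\ge0$ and every $1\le m\le D$. Two elementary bounds reduce this to the worst case $r=0$:
\[
2-x-x'\ge 1-q_m=\eta m/D,
\qquad
x+x'\le 1+q_m=2-\eta m/D\le 2-\eta/D.
\]
With these, the factor $m$ cancels, and a sufficient condition is $\sigma^2 p_k\,\eta/D\ge k^{-\beta}(2-\eta/D)$. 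Substituting $p_k=c_\alpha k^{-\alpha}$, this becomes
\[
k^{\beta-\alpha}\ge \frac{D(2-\eta/D)}{c_\alpha\eta\sigma^2}=N_0^{\beta-\alpha}.
\]
Because $\beta>\alpha$, this holds for $k=N+1>N_0$, which is implied by $N>N_0$.

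\textbf{Step 3 (crossover scale).} Since $0<\eta<1\le D$, we have $2-\eta/D\in[1,2]$. Therefore $N_0=\bigl(C\,D/\sigma^2\bigr)^{1/(\beta-\alpha)}$ with $C\in[1/(c_\alpha\eta),\,2/(c_\alpha\eta)]$, and the claimed $\Theta$ statement follows with constants depending only on $\alpha$ and $\eta$.

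The main obstacle is Step 2: the monotonicity of $g_k$ must hold uniformly over $m$ and $r$. The key point is that the binding case is $r=0$, $m=1$. I will check this by the pair of elementary bounds above rather than by any finer analysis of the binomial expectation. If that uniform bound turned out too loose, a fallback would be to average over $m$ directly using $\E[m]=Dp_k$; the pointwise argument, however, already reproduces exactly the threshold $N_0$ in the statement, which suggests it is the intended route.
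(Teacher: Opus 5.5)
Your proposal is correct and follows the paper's proof essentially step for step. The paper also isolates the single added coordinate $j=N+1$, shows each per-$m$ forward difference is nonnegative by evaluating its bracket at the worst case $r=0$ with $1-a_m=\eta m/D$ and $1+a_m\le 2-\eta/D$ (recovering exactly $N_0$), and concludes with the same monotone-perturbation argument for earliest minimizers, phrased by contradiction rather than directly as you do; the one harmless slip is that the $\Theta$ constants in Step 3, $(1/(c_\alpha\eta))^{1/(\beta-\alpha)}$ and $(2/(c_\alpha\eta))^{1/(\beta-\alpha)}$, also depend on $\beta$, which is fixed.
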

\paragraph{Findings:} Under a fixed token budget, increasing $N$ extends the model toward rarer and weaker knowledge units without providing additional observations, causing the noise-fitting effect to dominate earlier once the model passes the explicit crossover scale in Theorem~\ref{thm:model-size-stopping}, as illustrated in the left panel of Figure~\ref{fig:motivation}.

\begin{theorem}[Stopping time under linear data--model scaling]
\label{thm:linear-scaling}
Suppose $\beta>\alpha$. For fixed $\sigma^2>0$ and $C_0>0$, suppose that
\[
    \frac{D}{N}\longrightarrow C_0.
\]
Then, as $D,N\to\infty$,
\[
    r^*(D,N)
    =
    \Theta\!\left(
        D^{\alpha/\beta}
    \right).
\]
\end{theorem}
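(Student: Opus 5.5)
The plan is to study the sign of the one-step increment $\Delta(r):=2\pop_{D,N}(r+1)-2\pop_{D,N}(r)$, using the exact decomposition stated before Theorem~\ref{thm:noise-decay}. We then show that $\Delta(r)<0$ for $r\le c_1D^{\alpha/\beta}$ and $\Delta(r)>0$ for all $r\ge c_2D^{\alpha/\beta}$. Together these give $c_1D^{\alpha/\beta}\le r^*(D,N)\le c_2D^{\alpha/\beta}$.

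\textbf{Per-unit increment.} Fix a unit $k\le N$ with $m>0$ observations and set $q=1-\eta m/D\in[0,1)$. The knowledge-acquisition term changes by $-k^{-\beta}q^{2r}(1-q^2)$. The noise term changes by
\[
\sigma^2\frac{p_k}{m}\,q^r(1-q)\bigl(2-q^r-q^{r+1}\bigr)=\frac{\sigma^2\eta p_k}{D}\,q^r\bigl(2-q^r-q^{r+1}\bigr).
\]
Hence
\[
\Delta(r)=\sum_{k\le N}\E_m\Bigl[\eta\tfrac{m}{D}\,q^r\Bigl(\tfrac{\sigma^2 p_k}{m}\bigl(2-q^r-q^{r+1}\bigr)-k^{-\beta}q^r(1+q)\Bigr)\Bigr].
\]
Since $m\approx Dp_k$, we have $q\approx 1-\eta p_k$, which does not depend on $D$.

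The heuristic scales are as follows. Unit $k$ is effectively fitted after $r\gtrsim 1/(\eta p_k)$ steps, so at step $r$ the frontier is $k_r\asymp (r)^{1/\alpha}$. Continuing is profitable for unit $k$ when $k^{-\beta}\gtrsim\sigma^2/D$, that is, when $k\lesssim K_\sigma:=(D/\sigma^2)^{1/\beta}$. Balancing $k_r\asymp K_\sigma$ gives $r\asymp D^{\alpha/\beta}$.

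Because $D/N\to C_0$ and $\beta>1$, we get $K_\sigma\ll N$, so the truncation $k\le N$ is irrelevant near the frontier. For the same reason every relevant unit has $Dp_k\gtrsim D^{1-\alpha/\beta}\to\infty$. For these units I replace $m$ by $Dp_k$ using a Chernoff bound on $B(D,p_k)$, which gives multiplicative error $1+o(1)$ on $q$ and on $q^r$ in the regime $r p_k=O(\log D)$.

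\textbf{Lower bound.} Take $r\le c_1D^{\alpha/\beta}$ and split into three groups.
\begin{itemize}
\item Units $k\le k_r/2$ are already nearly fitted. Both of their contributions carry a factor $q^r\le e^{-c r p_k}$, and their net sign is favorable as long as $k\le K_\sigma$, which holds.
\item Frontier units $k\in[k_r,2k_r]$ have $q^r\asymp 1$. They contribute a bias decrease of order $\sum k^{-\beta}p_k\asymp k_r^{1-\alpha-\beta}$ and a noise increase of order $\frac{\sigma^2}{D}\sum p_k\cdot rp_k\asymp \frac{\sigma^2}{D}\,r\,k_r^{1-2\alpha}$.
\item Tail units $k>2k_r$ up to $N$ have $2-q^r-q^{r+1}\approx(2r+1)\eta p_k$. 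Their noise contribution is $\lesssim \frac{\sigma^2 r}{D}\sum_{k>k_r}p_k^2\asymp \frac{\sigma^2}{D}\,r\,k_r^{1-2\alpha}$, and the rare units with $m\in\{0,1,\dots\}$ small are bounded crudely by the same sum, since $p_k/m\cdot(1-q)\le \eta p_k/D$.
\end{itemize}
Substituting $r\asymp k_r^\alpha$ reduces the comparison to $k_r^{-\beta}$ versus $\sigma^2/D$. For $c_1$ small the bias decrease dominates, so $\Delta(r)<0$.

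\textbf{Upper bound.} For $r\ge c_2D^{\alpha/\beta}$ the frontier lies beyond $K_\sigma$. Units $k\lesssim K_\sigma$ now have $q^r$ small, so their bias term (quadratic in $q^r$) is dominated by their noise term (linear in $q^r$) up to the constant $\sigma^2/D$ versus $k^{-\beta}q^r$. Units near and beyond the frontier have $k^{-\beta}\ll\sigma^2/D$, so noise wins there as well. Summing over $k$ shows $\Delta(r)>0$.

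\textbf{Main obstacle.} I expect the hardest part to be making the upper-bound sign hold uniformly for all $r\ge c_2D^{\alpha/\beta}$, including $r\to\infty$, while averaging over the binomial $m$. The argmin definition requires $\Delta>0$ on this whole range and not just at one point. I would try a termwise inequality first. The per-unit bracket is positive whenever $q^r(1+q)k^{-\beta}<\frac{\sigma^2p_k}{m}(2-q^r-q^{r+1})$, and this is automatic once $q^r\le \sigma^2 p_k k^{\beta}/(4m)$. I would split the units into those where this holds and a finite frontier band, and control the band by its explicit $\asymp$ sum. Units with small $m$, where concentration fails, contribute only positive noise terms plus a bias term controlled by $\Pr(m\le Dp_k/2)\le e^{-cDp_k}$, which is negligible.
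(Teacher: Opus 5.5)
Your mechanism and scales match the paper's: the sign of the forward difference, with the frontier $k_r\asymp r^{1/\alpha}$ compared against $K_\sigma\asymp(D/\sigma^2)^{1/\beta}$. Your lower bound is essentially the paper's comparison of $S_D(r)\asymp r^{(1-\alpha-\beta)/\alpha}$ with $V_D(r)\asymp\frac{\sigma^2}{D}r^{(1-\alpha)/\alpha}$ for $r\le D$. There is one slip there. Units $k\le k_r/2$ do \emph{not} have favorable net sign: their bias decrease carries $q^{2r}$ while their noise increase carries only $q^r$, so noise wins once $q^r\ll\sigma^2k^\beta/D$ (e.g.\ for $k=1$ once $r\gg\log D$). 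You only need their size, $\frac{\sigma^2}{D}\sum_{k\le k_r/2}p_ke^{-crp_k}\lesssim\frac{\sigma^2}{D}k_r^{1-\alpha}$, which the frontier bias absorbs for small $c_1$.

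The genuine gap is in the upper bound, exactly where you flag it. First, your termwise criterion ``$q^r\le\sigma^2p_kk^\beta/(4m)$ suffices'' is not valid. For a unit not yet fitted ($q^r\approx1$), the noise part of the bracket is only $\frac{\sigma^2p_k}{m}(2-q^r-q^{r+1})\approx(2r+1)\eta\sigma^2p_k/D$. So the correct comparison is $k^{-\beta}$ against $\eta\sigma^2rp_k/D$, i.e.\ $k^{\beta-\alpha}$ against $D/(\sigma^2r)$ up to constants, not $k^{-\beta}$ against $\sigma^2/D$. For instance, at $r=0$ the bracket is $\frac{\eta\sigma^2p_k}{D}-k^{-\beta}(1+q)$, which is negative for many pairs that meet your condition. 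This is where $\beta>\alpha$ actually enters.

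Second, the claim that the small-$m$ bias is ``negligible'' is not justified uniformly in $r$. The positive part also decays geometrically, so an $r$-independent bound such as $k^{-\beta}p_ke^{-cDp_k}$ eventually exceeds it. Moreover, for units with $Dp_k=O(1)$ the Chernoff factor is not small at all. The missing idea is the paper's $r\ge D$ step, a comparison of decay rates. Every bias term carries $(1+q)q^{2r}\le2(1-\eta/D)^{2r}$. By contrast, the noise from the event $M_k=1$ on tail units with $p_k\le1/D$, $k\le N$, decays only like $(1-\eta/D)^r$ and, for $r\ge D$, has size $\gtrsim\sigma^2D^{(1-2\alpha)/\alpha}$. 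Against $S_D(r)\le C(1-\eta/D)^{2r}D^{(1-\alpha-\beta)/\alpha}$, this gives $S_D/V_D=O\bigl((1-\eta/D)^rD^{1-\beta/\alpha}\bigr)=o(1)$.

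With these repairs your termwise route does close, and it handles uniformity in $r$ more cleanly than the paper's split at $r=D$.
\begin{itemize}
\item Each bracket $B_{k,m}(r)=\frac{\sigma^2p_k}{m}(2-q^r-q^{r+1})-k^{-\beta}q^r(1+q)$ is nondecreasing in $r$. Hence positivity at $r_0=c_2D^{\alpha/\beta}$ persists for all $r\ge r_0$.
\item Using $2-q^r-q^{r+1}\ge2(1-q^r)$ and Bernoulli's inequality $q^{-r}-1\ge r\eta m/D$ gives $B_{k,m}(r)\ge2q^r\bigl(\eta\sigma^2rp_k/D-k^{-\beta}\bigr)$. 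This is positive for \emph{every} $m\ge1$ once $c_\alpha k^{\beta-\alpha}>D/(\eta\sigma^2r_0)$, i.e.\ for $k\gtrsim c_2^{-1/(\beta-\alpha)}D^{1/\beta}$. No frontier band is needed.
\item The remaining head units have $Dp_k\gtrsim D^{1-\alpha/\beta}\to\infty$. For $m\ge Dp_k/2$, the bound $q^{-r}-1\ge e^{\eta rm/D}-1$ reduces positivity, with $t=D^{1/\beta}/k\ge1$, to $e^{\eta c_\alpha c_2t^\alpha/2}-1>t^\beta/(2\sigma^2)$, which holds for all $t\ge1$ once $c_2$ is large.
\item The exceptional pairs with $m<Dp_k/2$ cost at most $\eta(1-\eta/D)^{2r}\sum_k k^{-\beta}p_ke^{-Dp_k/8}$ over head units, which is super-polynomially small. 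For all $r\ge r_0$ this is dominated by the net increment of the $M_k=1$ events on tail units, which satisfy the criterion with room to spare and contribute $\gtrsim\sigma^2D^{(1-2\alpha)/\alpha}(1-\eta/D)^r\min\{1,r_0/D\}$.
\end{itemize}
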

\paragraph{Findings:} Under fixed tokens-per-parameter scaling, however, the data budget grows together with model size, allowing the reduction in knowledge-acquisition error to dominate for longer; Theorem~\ref{thm:linear-scaling} quantifies the resulting increase in the optimal repetition count shown in the right panel of Figure~\ref{fig:motivation}.

All proofs are deferred to Appendix~\ref{app:proof}.

%% file: sections/ablation.tex
\begin{figure*}[tp]
\centering
\includegraphics[width=0.972\textwidth]{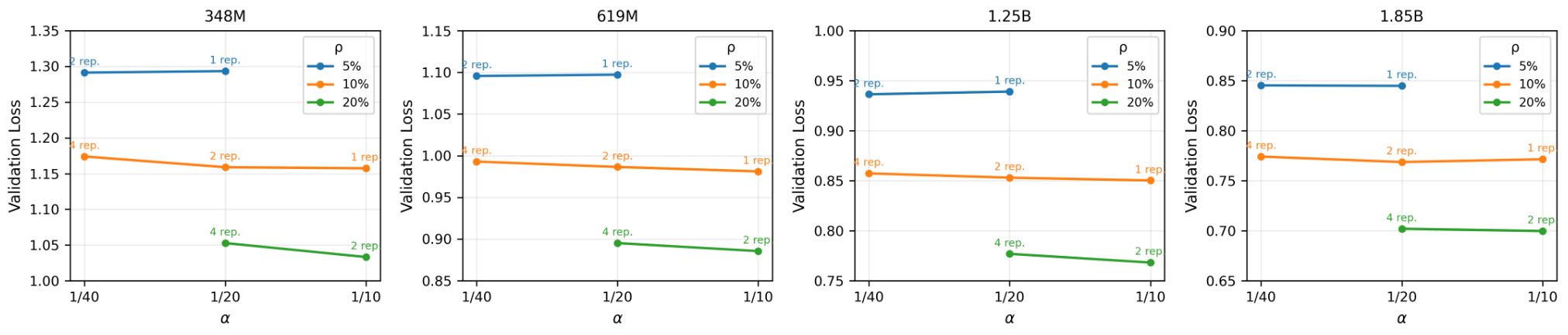} \\[4pt]
\includegraphics[width=0.972\textwidth]{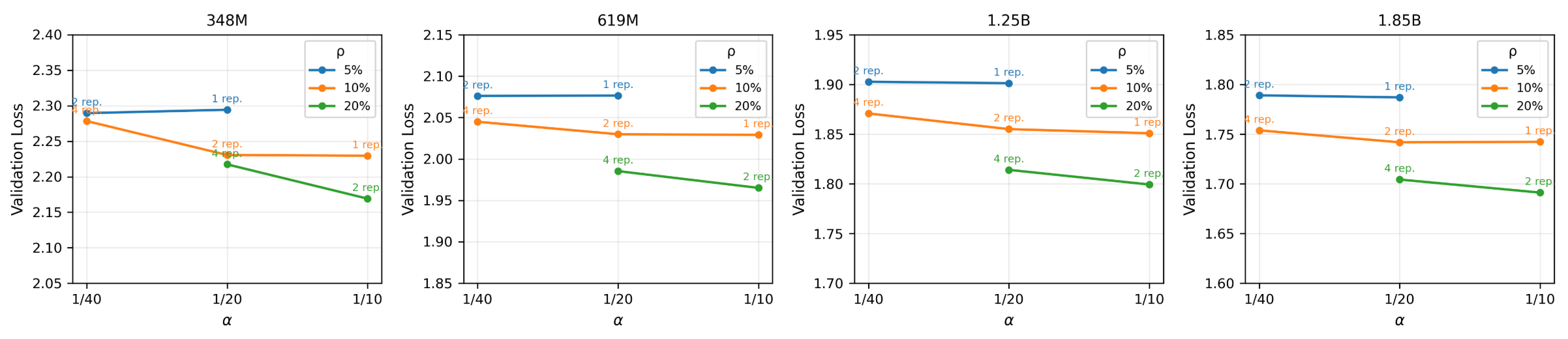}
\caption{
\textbf{Final validation loss at fixed total high-quality data fractions.}
Top: Math. Bottom: Wiki. We fix the total high-quality-data fraction and vary the repetition count, such that more repetition corresponds to fewer unique high-quality tokens. More unique data consistently yields lower validation loss in both domains. However, Math is relatively robust to repetition, whereas Wiki degrades sharply when the repetition count increases beyond 2.
}
\label{fig:fixed_concentration_main}
\vspace{-0.3cm}
\end{figure*}

\section{Additional Analyses}
\label{sec:ablation}
In this section, we present additional analyses of data repetition. Section 5.1 compares repeated and unique high-quality tokens at a fixed total domain fraction and shows that the effect of repetition is strongly domain dependent. Section 5.2 examines OOD pretraining performance and finds that replacing unique high-quality tokens with repeated tokens has only a limited effect when the total high-quality and web-data fractions are fixed. Finally, Section 5.3 examines whether the optimal repetition count depends on the learning-rate schedule.

\subsection{Unique versus Repeated Tokens at a Fixed Domain Fraction}
\label{sec:fixed_concentration}

In the previous section, we fix the amount of unique high-quality data and vary its repetition count. We now consider the complementary setup studied by \cite{muennighoff2023scaling}, where the total number of training tokens from a dataset is fixed and repeated tokens are directly compared with the same number of unique data. Their results suggest that, for up to roughly four repetitions, repeated data can achieve performance close to that of an equivalent amount of unique data. However, this analysis does not distinguish how this trade-off varies across data domains.

We therefore study the same question separately for different high-quality domains. Let $\alpha$ denote the fraction of unique high-quality data and $e$ its repetition count. The total fraction of high-quality tokens consumed during training is
$$
\rho = \alpha e.
$$
For a fixed $\rho$, increasing $e$ requires decreasing the amount of unique data according to
$
\alpha = \rho / e.
$
This setup allows us to directly ask whether repeating a smaller unique dataset $e$ times can match the performance of using $e$ times as much unique data once, under the same total high-quality token budget.

Figure~\ref{fig:fixed_concentration_main} shows the results for Math and Wiki. We find that the result depends strongly on the domain. For Math, increasing the repetition count from 1 to 4 causes only a small increase in validation loss. Thus, repeated Math data can largely substitute for additional unique Math data, consistent with the observation of \cite{muennighoff2023scaling}. In contrast, Wiki exhibits a clear degradation as repetition increases. While repeating the data twice incurs only a modest loss increase, heavier repetition leads to substantially worse validation loss, indicating that repeated Wiki tokens cannot effectively replace the corresponding amount of unique data.

These results show that the effectiveness of repeated data under a fixed token budget is domain dependent, rather than being characterized by a universal repetition threshold. Additional results in Appendix~\ref{app:sec5} show the same contrast across the remaining domains. Both Code and Medical show increasing validation loss as more unique data is replaced by repeated data, although the magnitude of degradation varies across domains.

\begin{figure*}[tp]
\centering
\includegraphics[width=0.972\textwidth]{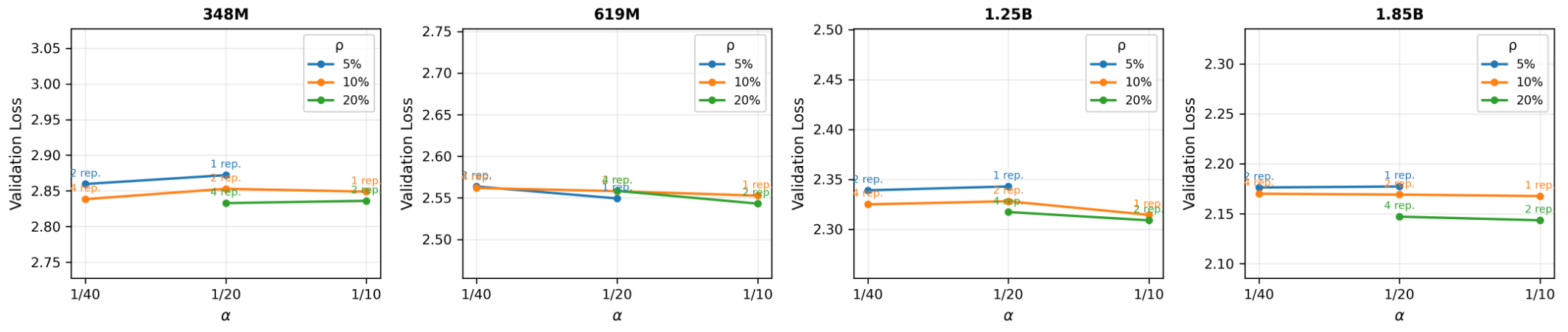} \\[4pt]
\includegraphics[width=0.972\textwidth]{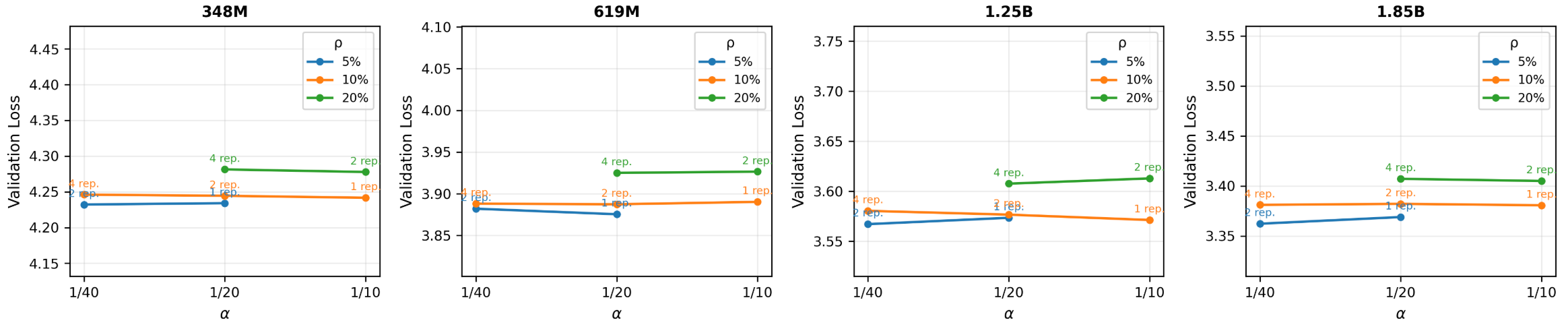}
\caption{
\textbf{OOD validation loss at fixed total high-quality data fractions.}
Top: ArXiv. Bottom: News. Each line fixes the total fraction of high-quality tokens, while the annotations indicate the corresponding repetition counts. Unlike the in-domain validation loss, the OOD loss changes only slightly as unique tokens are replaced with repeated tokens and shows no consistent monotonic relationship with the repetition count.
}
\label{fig:fixed_concentration_ood}
\vspace{-0.3cm}
\end{figure*}

\subsection{Effect of Repetition on OOD Pretraining Performance}
\label{sec:ood_results}

Next, we study whether repetition within a high-quality domain affects pretraining performance outside that domain. To isolate the effect of repetition, it is important to control the amount of web data used during training. If we instead fix the amount of unique high-quality data and increase its repetition count, the total amount of high-quality data increases accordingly, leaving fewer training tokens for web data. Any change in OOD performance would then confound the effect of repetition with the change in web data.

We therefore adopt the fixed-domain-fraction setup from Section~\ref{sec:fixed_concentration}. Specifically, we fix the total fraction of high-quality Math tokens, and hence also the amount of web data, while varying the repetition count by changing the amount of unique high-quality data. This allows us to isolate how replacing unique high-quality tokens with repeated tokens affects performance outside the repeated domain.

We evaluate the trained models on two OOD pretraining validation sets, ArXiv and News. Figure~\ref{fig:fixed_concentration_ood} reports the results. In contrast to the clear domain-dependent degradation observed on in-domain validation sets, the OOD validation loss remains largely stable as the repetition count changes. Across model sizes and total high-quality data fractions, the differences between configurations using 1, 2, or 4 repetitions are generally small.

These results suggest that, when the total fractions of high-quality and web tokens are fixed, replacing unique high-quality tokens with repeated tokens mainly affects performance within the repeated domain, while having limited effect on OOD pretraining performance.

\begin{figure*}[tp]
    \centering
    \setlength{\tabcolsep}{2pt}
    \begin{tabular}{ccc}
        \includegraphics[width=0.315\textwidth]{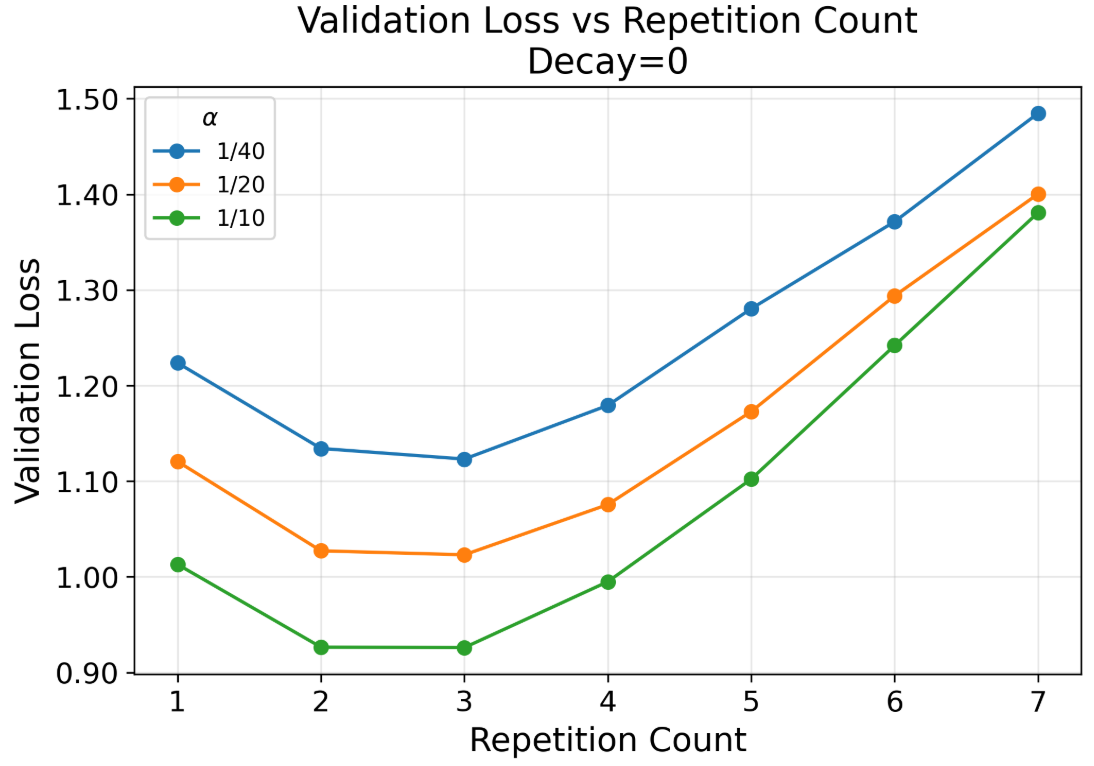} &
        \includegraphics[width=0.315\textwidth]{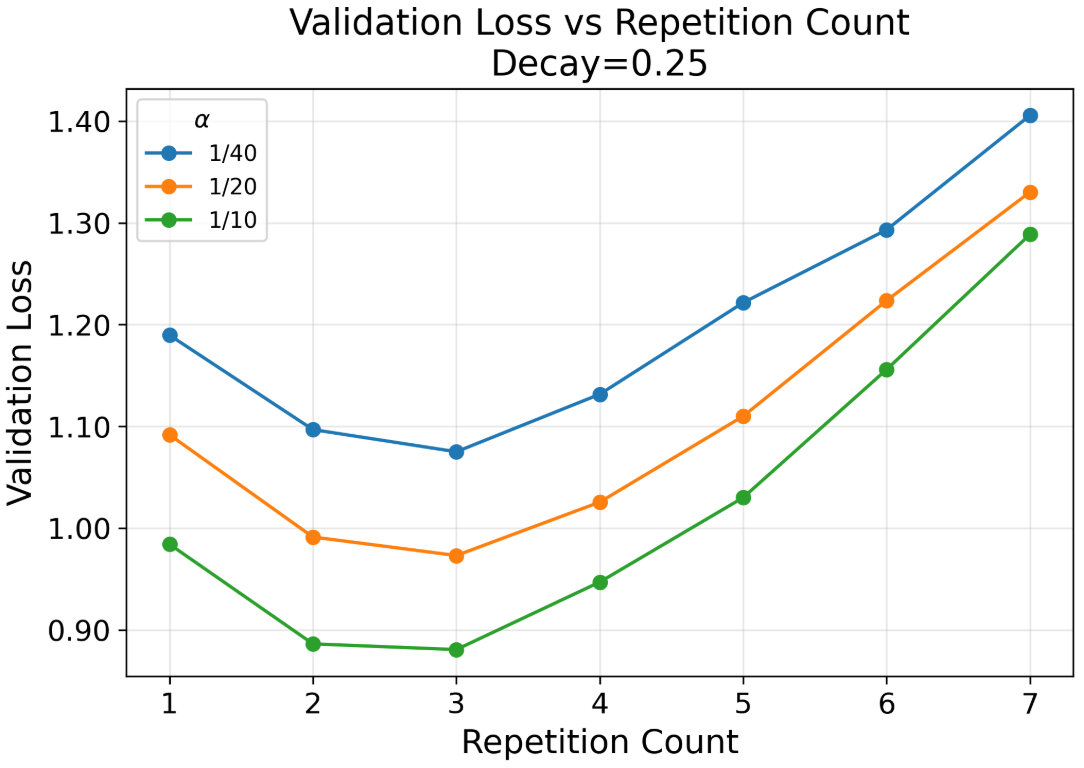} &
        \includegraphics[width=0.315\textwidth]{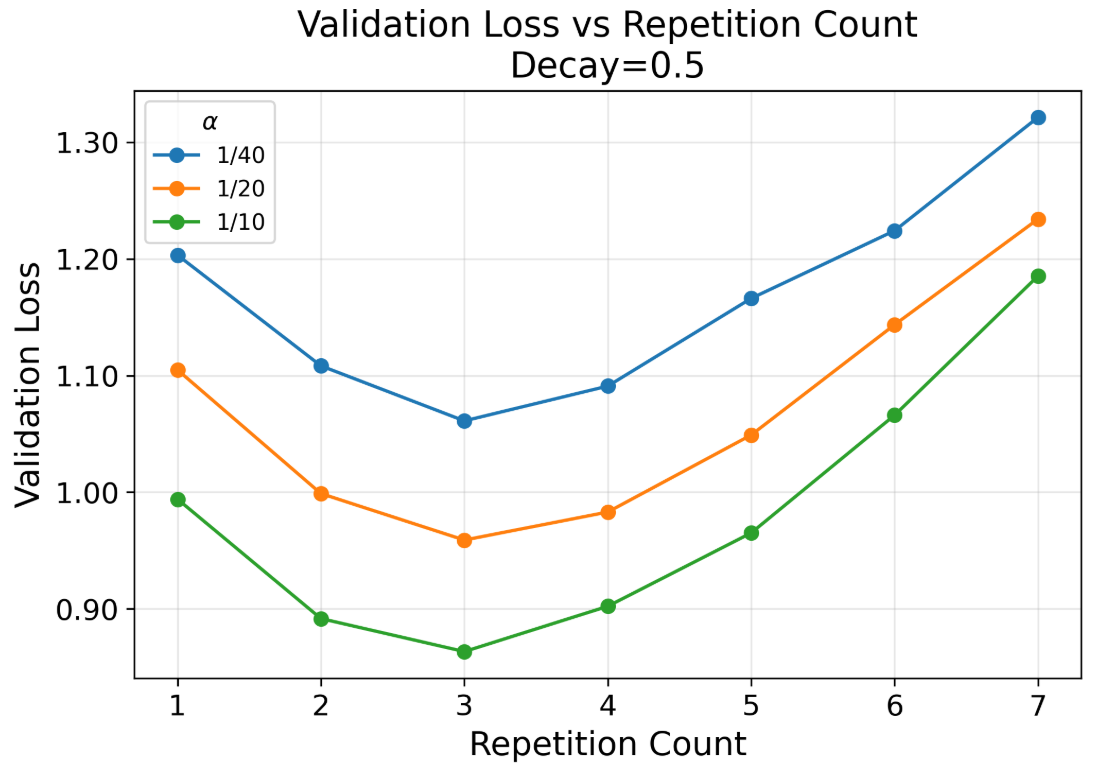} \\[-2pt]
        \multicolumn{3}{c}{
            \includegraphics[width=0.315\textwidth]{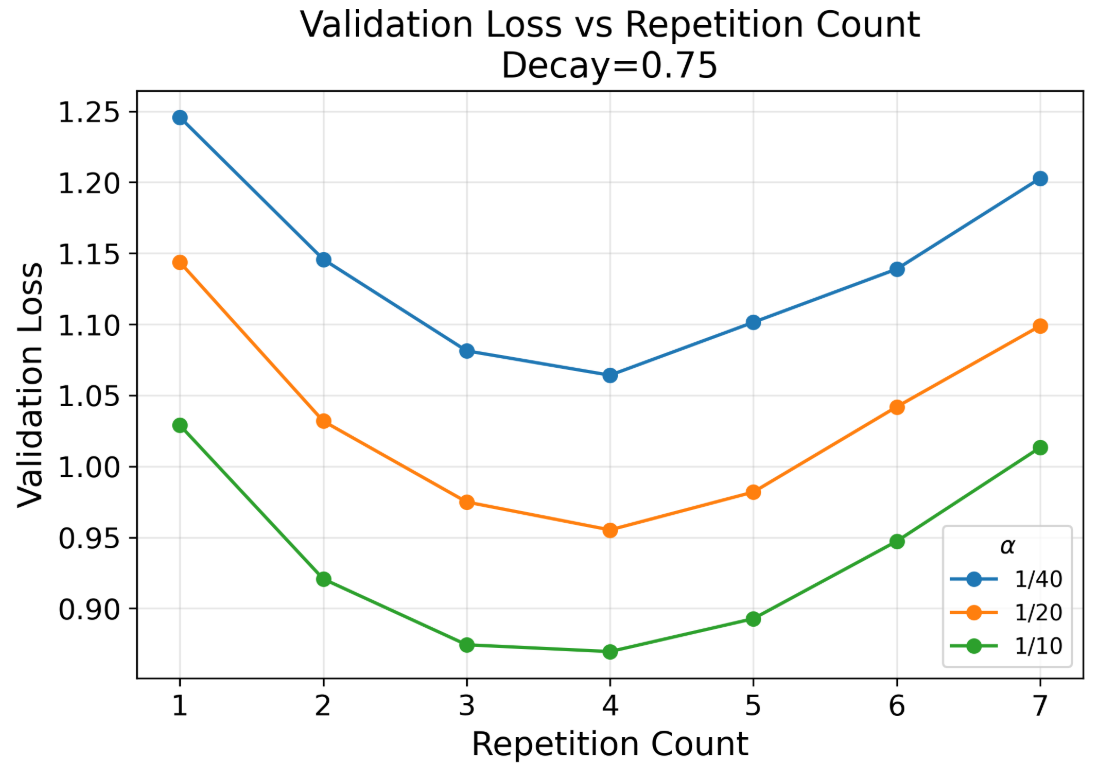}
            \hspace{2pt}
            \includegraphics[width=0.315\textwidth]{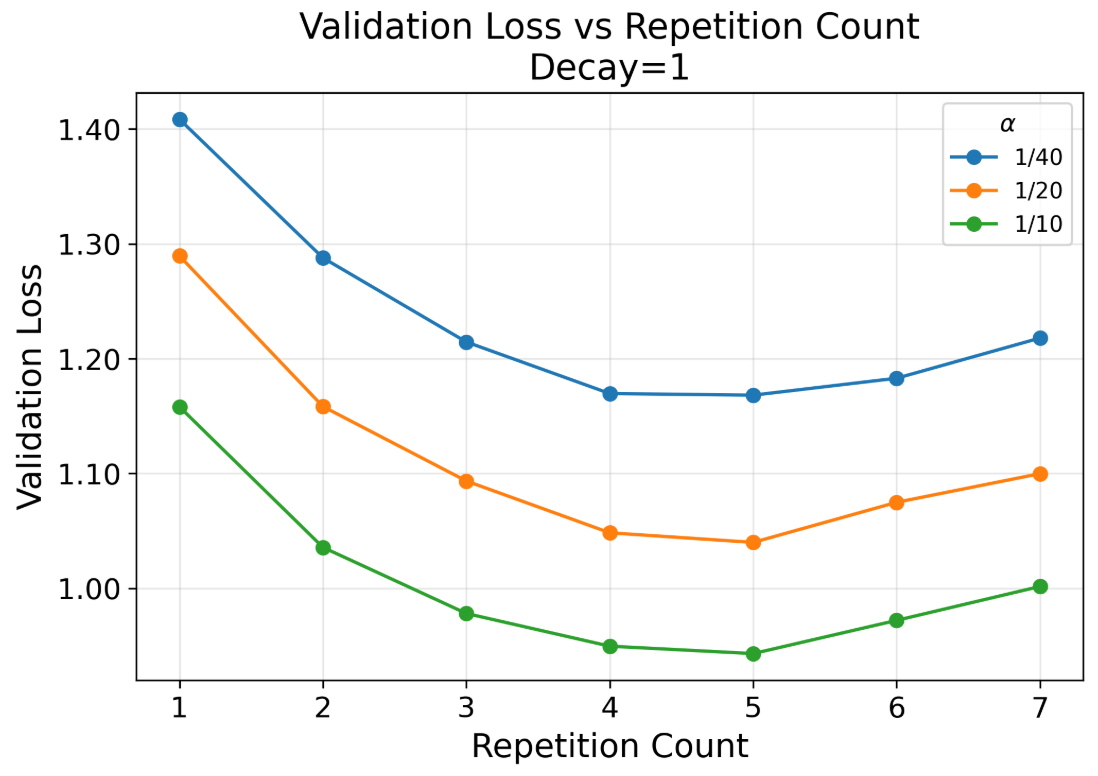}
        }
    \end{tabular}
    \caption{\textbf{Effect of the learning-rate schedule on the optimal repetition count.} From left to right: WSD with decay starting at $0\%$, $25\%$, $50\%$, and $75\%$ of training, followed by a constant learning rate. Earlier decay leads to degradation after fewer repetitions, while delaying or removing the decay allows more repetitions. }
    \label{fig:lr_schedule}
    \vspace{-0.2cm}
\end{figure*}

\subsection{Effect of Various Learning-Rate Schedules}
\label{sec:lr_schedule}

Finally, we examine whether the optimal repetition count depends on the learning-rate schedule. We compare a warmup-stable-decay (WSD) schedule with the decay phase starting at \(0\%\), \(25\%\), \(50\%\), or \(75\%\) of training, as well as a constant learning rate. For each schedule, we vary the fraction of unique high-quality data and the repetition count while keeping all other training settings unchanged.

As shown in Figure~\ref{fig:lr_schedule}, the learning-rate schedule has a clear effect on the optimal repetition count. Earlier decay causes the validation loss to increase after fewer repetitions, whereas delaying the decay shifts this increase to a larger repetition count. With a constant learning rate, the model tolerates the largest number of repetitions before degradation occurs.

A possible explanation is that repeated samples seen during the low-learning-rate stage can be fitted more closely, increasing the tendency to memorize sample-specific patterns. Delaying the decay keeps training at a relatively high learning rate for longer, which may reduce this effect through stronger optimization noise and implicit regularization. Overall, although excessive repetition eventually degrades performance across all schedules, the learning-rate schedule determines how much repetition the model can tolerate before this degradation begins.

%% file: sections/conclusion.tex
\section{Conclusion and Future Directions}

In this work, we study domain data repetition across model scales while keeping the tokens-per-parameter ratio fixed. We find that the optimal repetition count is strongly negatively correlated with the minimum validation loss of the repeated domain: domains with lower validation loss generally support more repetitions. At a fixed tokens-per-parameter ratio, the optimal repetition count increases mildly with model size, while remaining nearly insensitive to the fraction of unique high-quality tokens over the range considered. These observations suggest that repetition counts selected on smaller proxy models with the same TPP can provide conservative estimates for larger models. Our theoretical analysis further provides an explanation for the observed dependence on validation loss and model scale.

Our current setup repeats only one high-quality dataset in each training run. Future work should consider mixtures in which multiple domains are repeated simultaneously and study their interactions. Another direction is to develop a quantitative scaling rule that predicts the optimal repetition count of a large model from small-model results, enabling more accurate and efficient data-recipe selection.

%% file: appendix/details.tex
\section{General Implementation Details}
In this section, we describe the datasets and training hyperparameters used in our experiments.

\subsection{Details of Datasets}
Our pretraining corpus consists of general web data and several high-quality domain-specific datasets. The general web data contains content from diverse sources, covering a broad range of topics and writing styles. The high-quality data includes four domains:

\begin{itemize}

\item \textbf{Code data.}
A multilingual programming corpus covering a range of programming languages and software-development scenarios.

\item \textbf{Math data.}
A bilingual mathematical corpus containing educational materials, mathematical problems, and step-by-step solutions across different difficulty levels.

\item \textbf{Wiki data.}
Encyclopedic text covering diverse concepts, entities, and factual knowledge.

\item \textbf{Medical data.}
A collection of medical and health-related content covering biomedical knowledge, clinical topics, and health education.

\end{itemize}

\subsection{Details of Training Hyperparameters}
\label{app:training_hyperparameters}

The total training-token budget was scaled proportionally with model size under a fixed tokens-per-parameter ratio. Following \cite{bi2024deepseek}, we set the learning rate and global batch size for different model sizes according to power-law functions of the model size. The learning rate was linearly warmed up for the first 200 optimization steps and was then held constant for the remainder of training. We used an attention dropout rate of \(0.1\). Optimization used the Muon optimizer~\citep{liu2025muon}, a variant of AdamW.

%% file: appendix/experiments_4.tex
\section{Additional Results on Section~\ref{sec:optimal_repetition}}
\label{app:sec4}
In this section, we present additional results of Section~\ref{sec:optimal_repetition}.

Figures~\ref{fig:math_repetition},~\ref{fig:wiki_repetition},~\ref{fig:code_repetition}, and~\ref{fig:medical_repetition} present the complete results for Math, Wiki, Code, and Medical. Among the four domains, Math supports the most repetition, with an optimal repetition count of about 5--6, followed by Code and Wiki, while Medical has the lowest optimal repetition count of about 3--4. For each domain, the optimal repetition count increases with model size. In contrast, for a fixed model size, the optimal repetition count remains nearly unchanged across different values of $\alpha$. Overall, all experimental results are consistent with the observations in the main text.

\begin{figure*}[h]\centering
\begin{tabular}{cccc}
\includegraphics[width=0.23\textwidth]{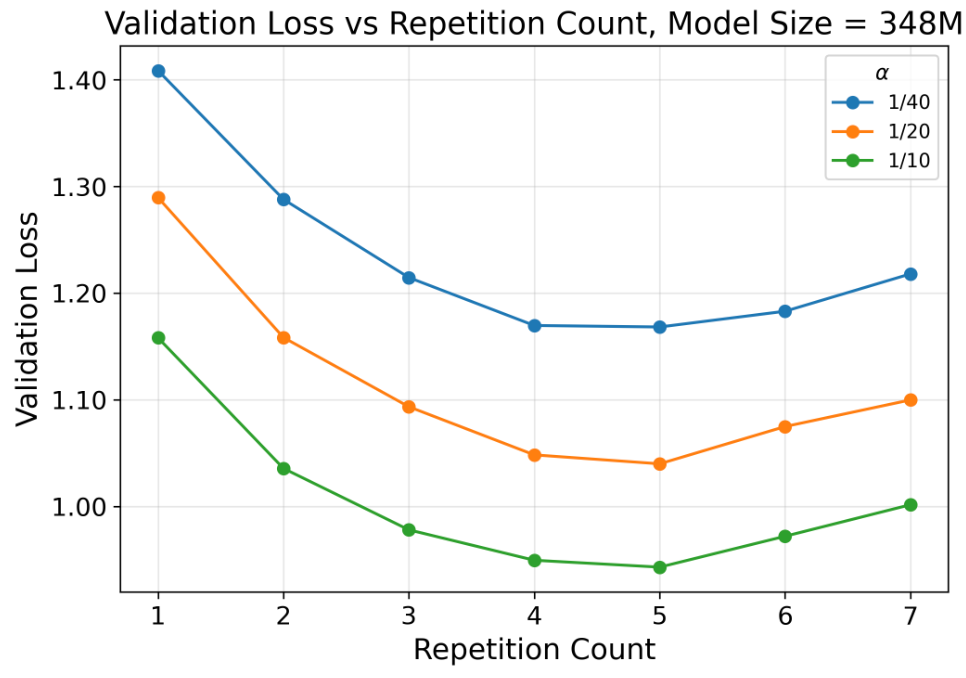} &
\includegraphics[width=0.23\textwidth]{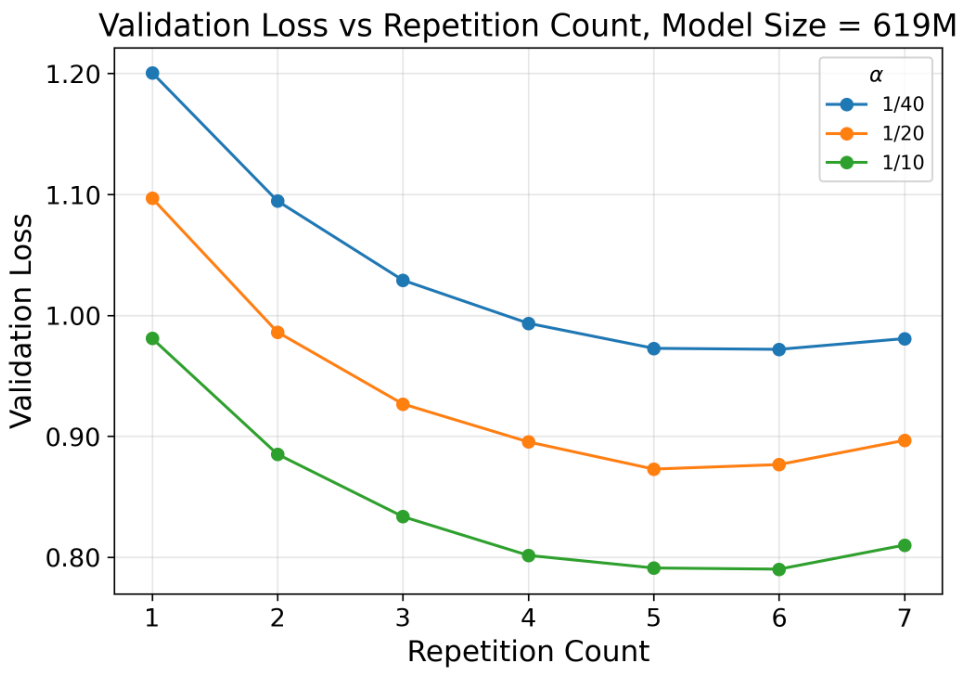} &
\includegraphics[width=0.23\textwidth]{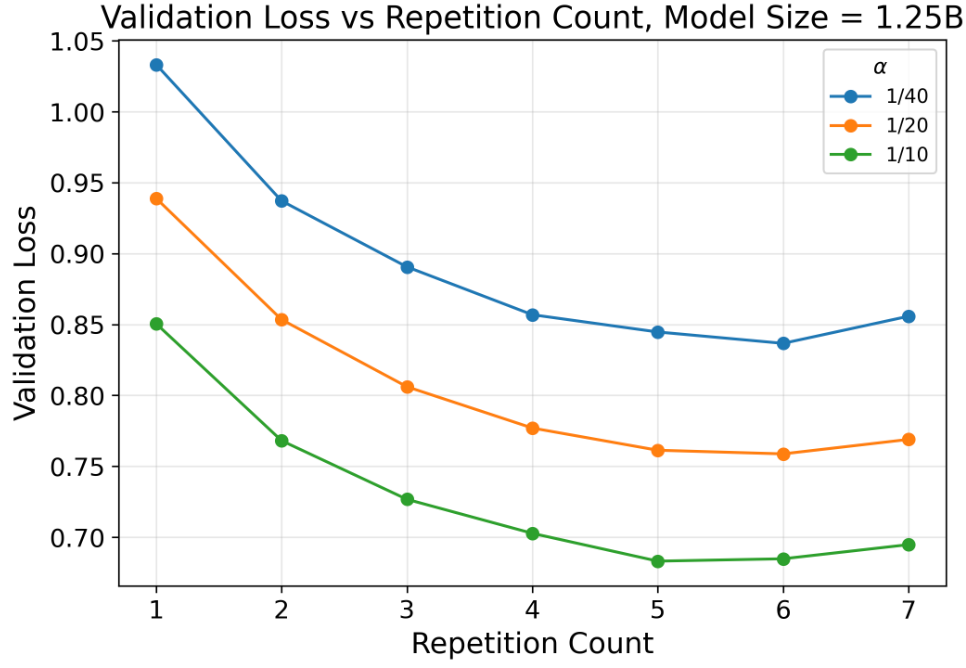} &
\includegraphics[width=0.23\textwidth]{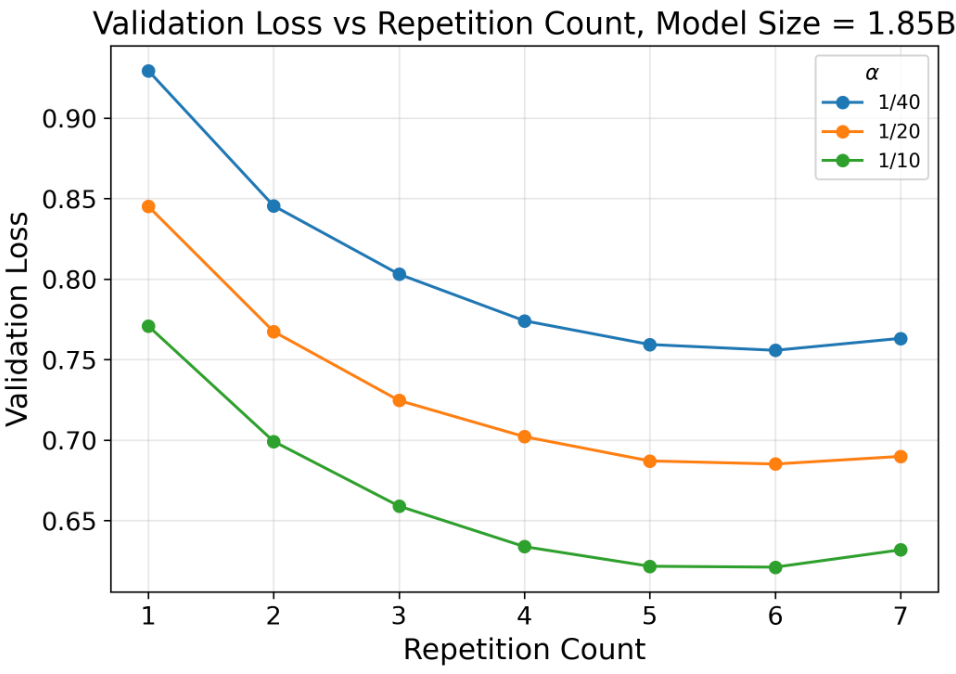} \\
\end{tabular}
\caption{
\textbf{Final validation loss across repetition counts for Math.}
Each panel corresponds to a different model size, and each curve represents a different fraction of unique high-quality data. All configurations are trained for the full token budget associated with the corresponding model size. The validation loss is generally minimized between 5 and 6 repetitions. Increasing the unique data fraction reduces the absolute validation loss but has little effect on the optimal repetition count, which increases only mildly with model size.
}
\label{fig:math_repetition}
\vspace{-0.3cm}
\end{figure*}

\begin{figure*}[h]\centering
\begin{tabular}{cccc}
\includegraphics[width=0.23\textwidth]{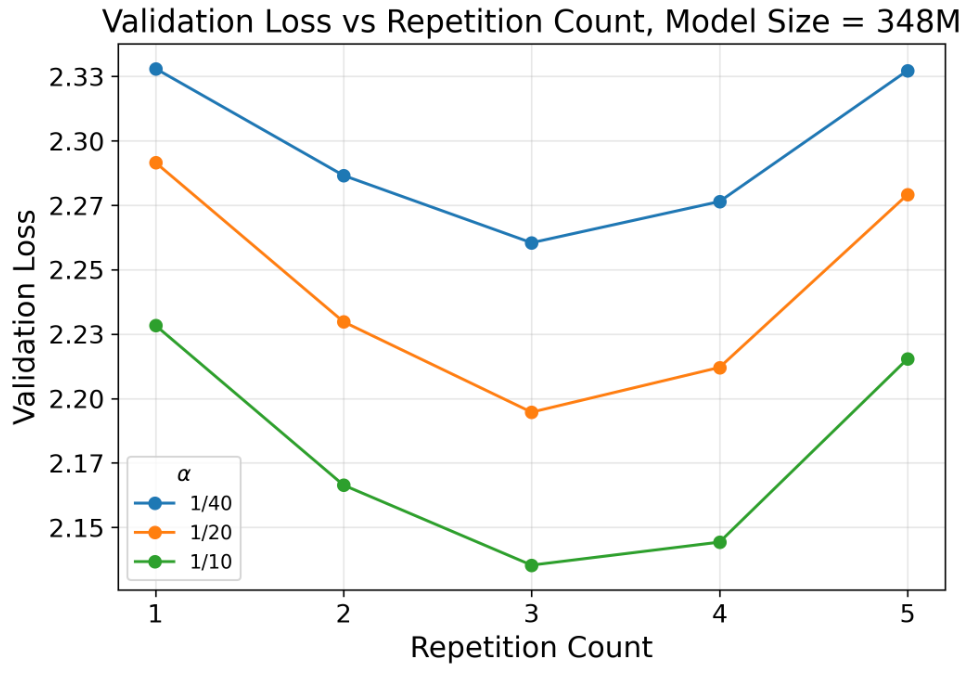} &
\includegraphics[width=0.23\textwidth]{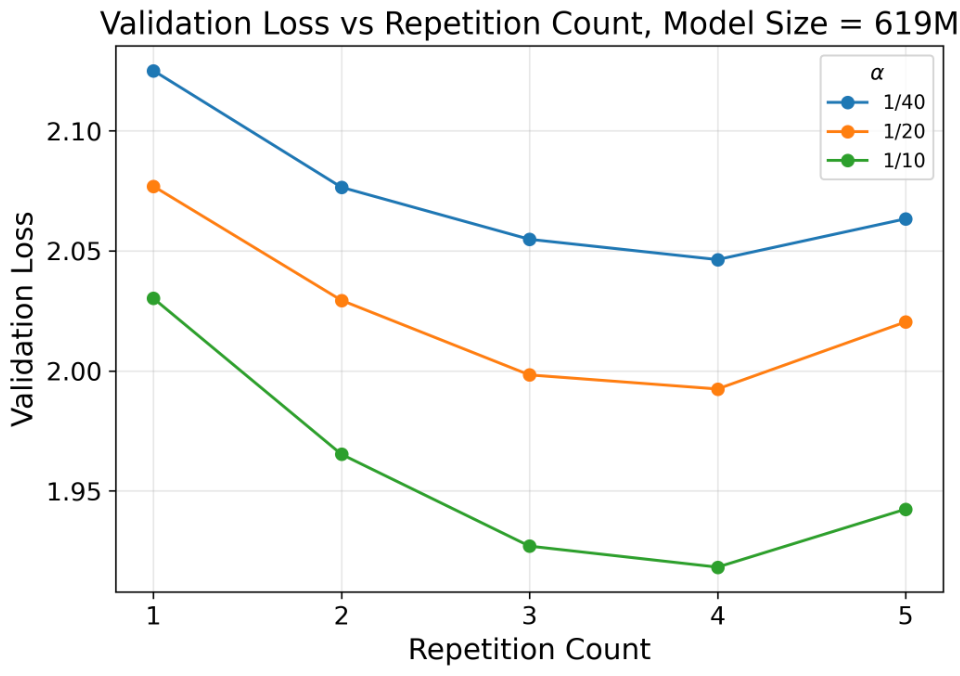} &
\includegraphics[width=0.23\textwidth]{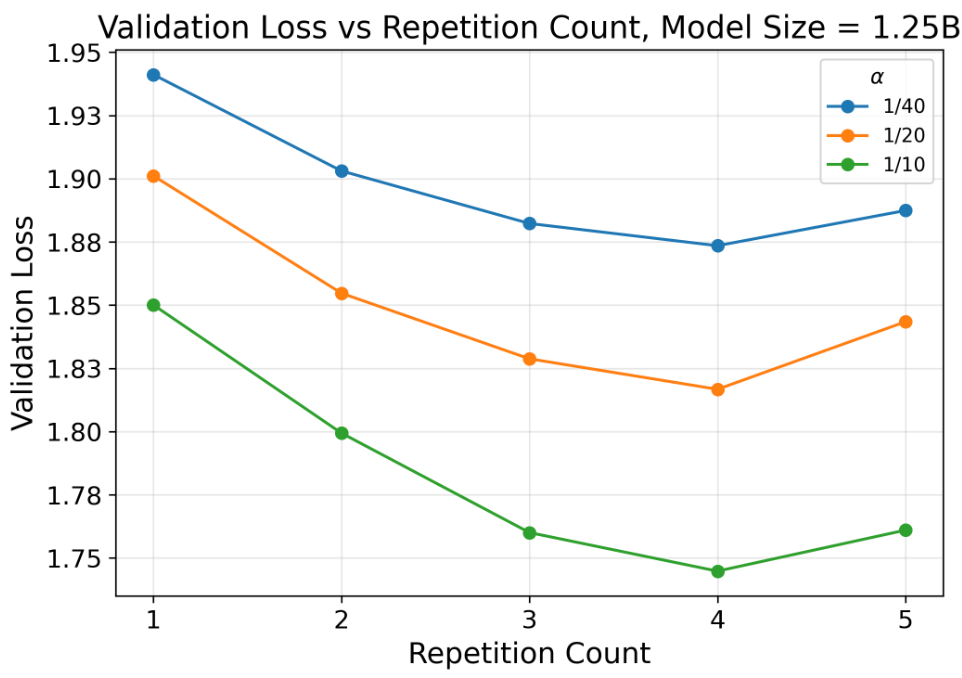} &
\includegraphics[width=0.23\textwidth]{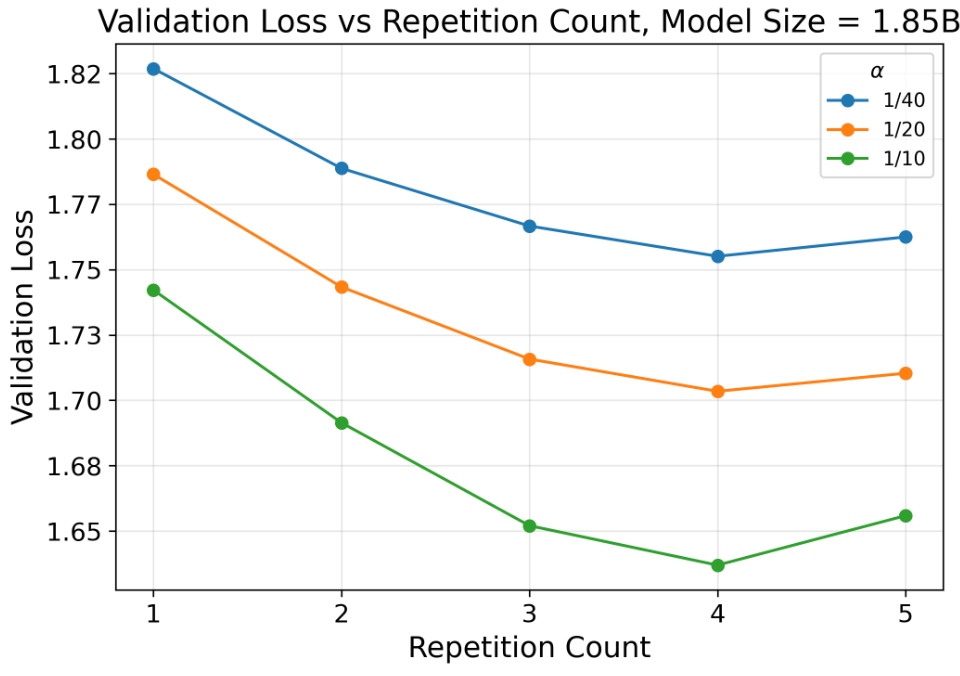} \\
\end{tabular}
\caption{
\textbf{Final validation loss across repetition counts for Wikipedia.}
Compared with Math, Wikipedia reaches its minimum validation loss earlier, typically after 3 to 4 repetitions, while showing the same stability across unique data fractions and mild increase with model size.
}
\label{fig:wiki_repetition}
\vspace{-0.3cm}
\end{figure*}

\begin{figure*}[h]
\centering
\begin{tabular}{cccc}
\includegraphics[width=0.23\textwidth]{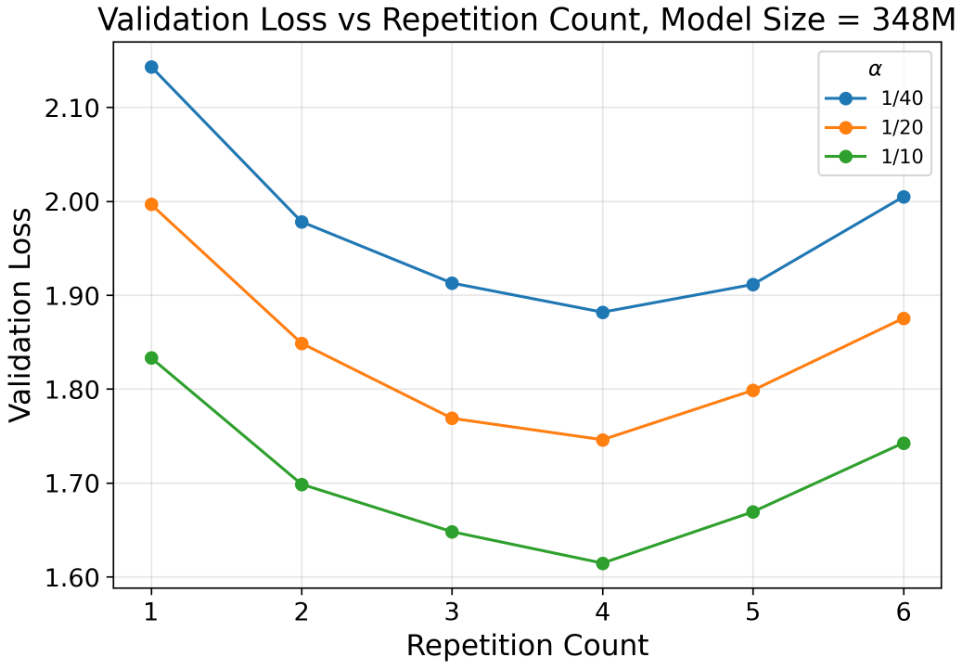} &
\includegraphics[width=0.23\textwidth]{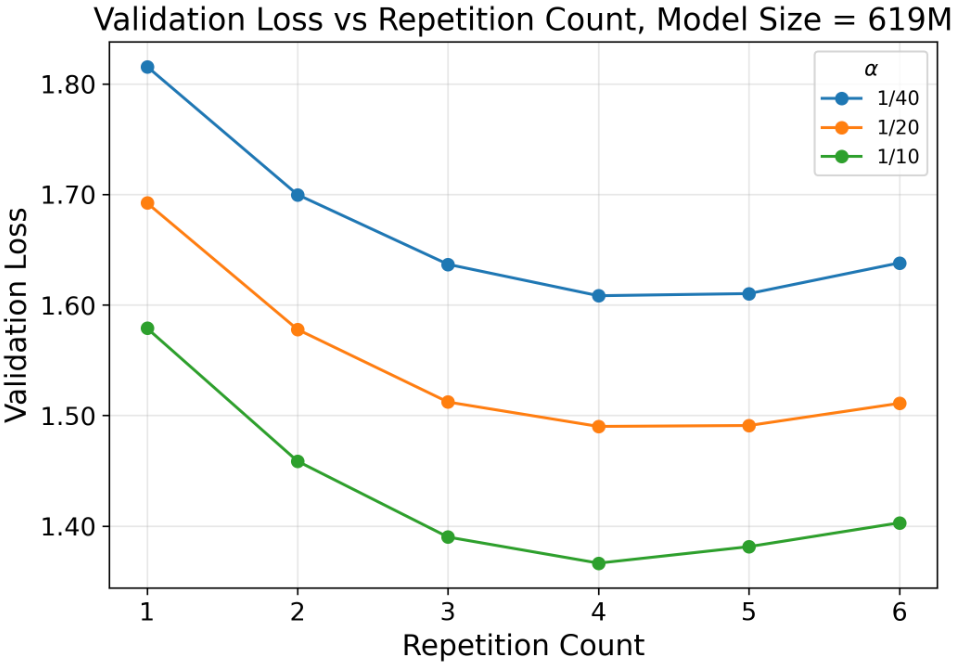} &
\includegraphics[width=0.23\textwidth]{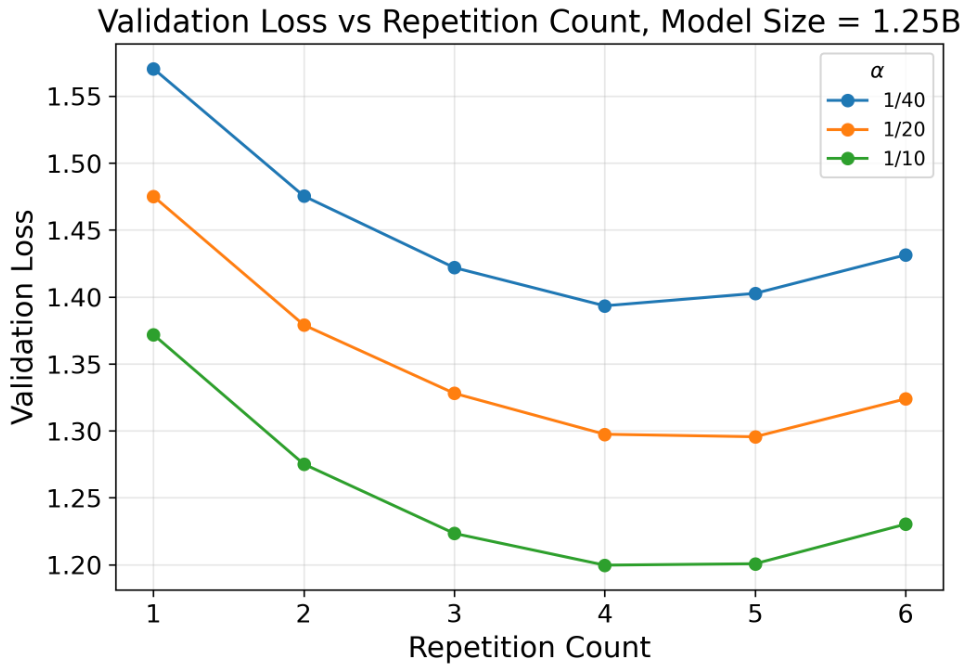} &
\includegraphics[width=0.23\textwidth]{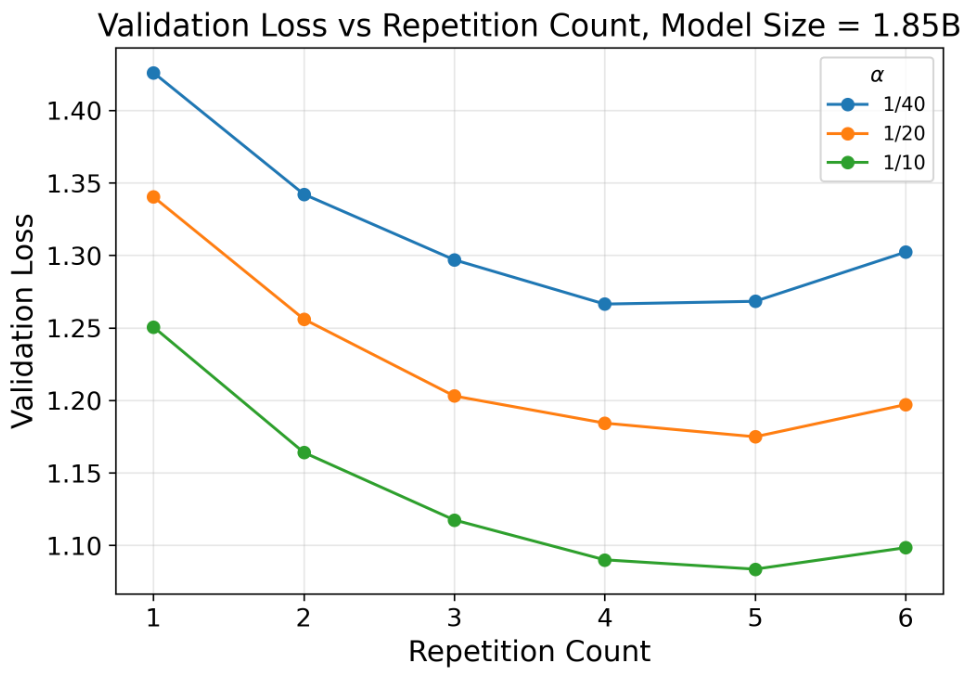} \\
\end{tabular}
\caption{
\textbf{Final validation loss across repetition counts for Code.}
Code generally reaches its minimum validation loss after 4 to 5 repetitions. The optimal repetition count remains stable across unique data fractions and increases mildly with model size.
}
\label{fig:code_repetition}
\vspace{-0.3cm}
\end{figure*}

\begin{figure*}[!t]
\centering
\begin{tabular}{cccc}
\includegraphics[width=0.23\textwidth]{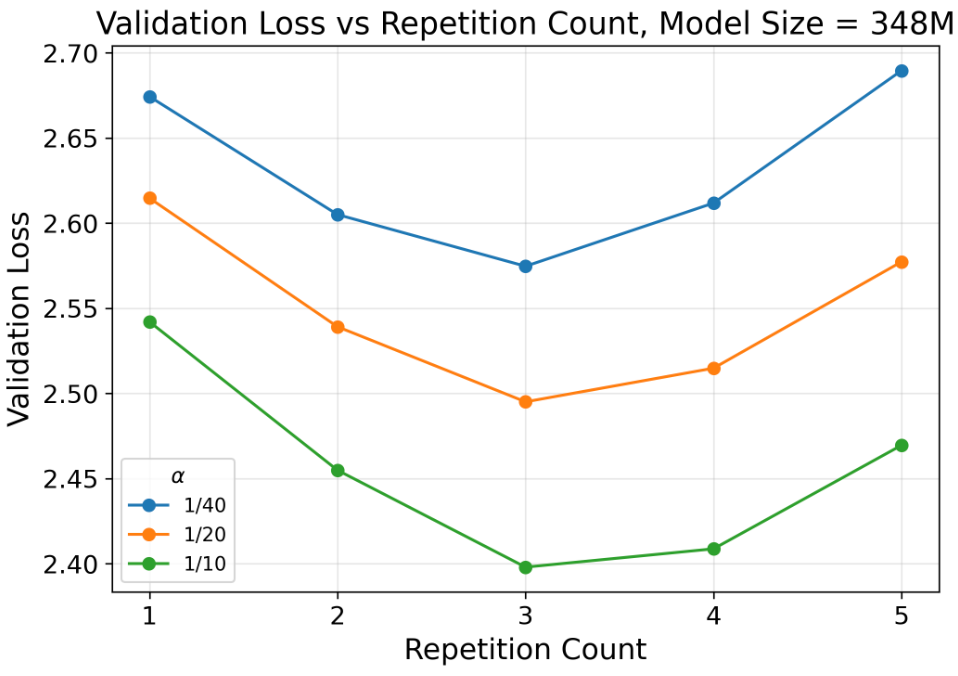} &
\includegraphics[width=0.23\textwidth]{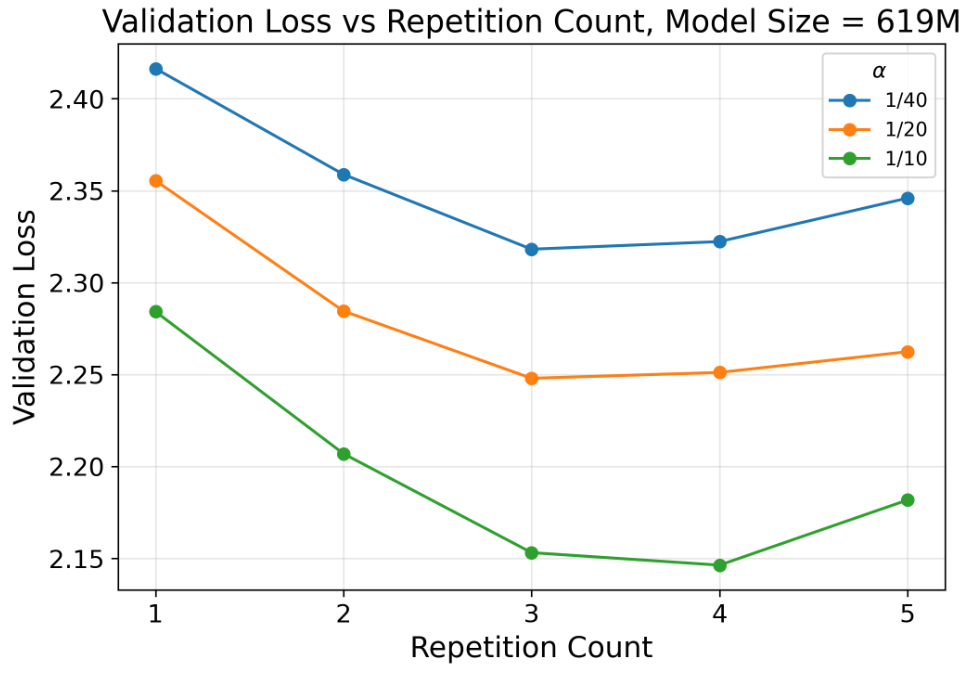} &
\includegraphics[width=0.23\textwidth]{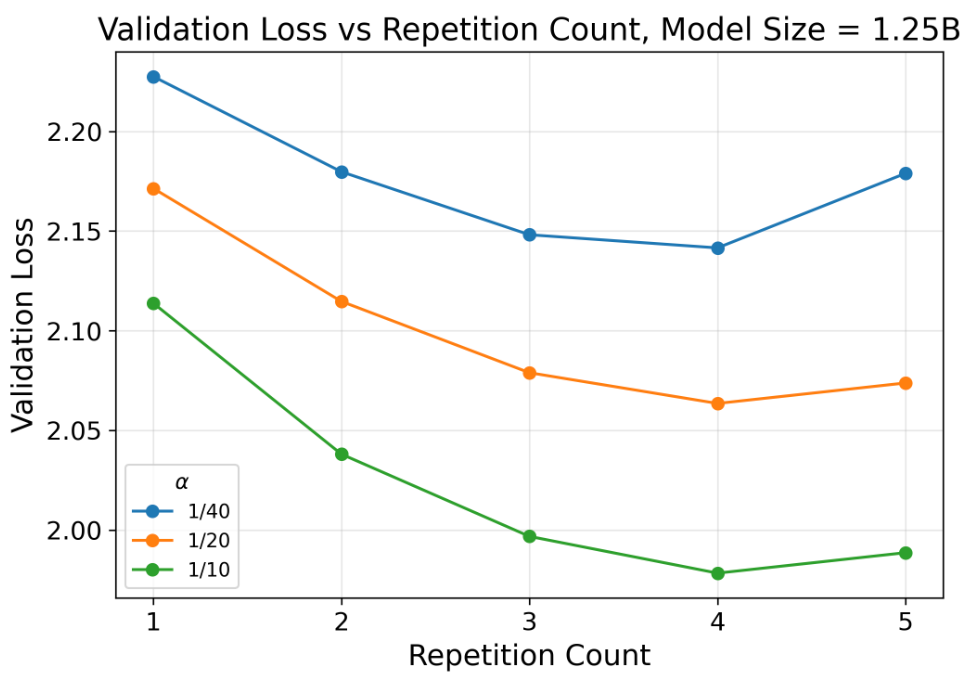} &
\includegraphics[width=0.23\textwidth]{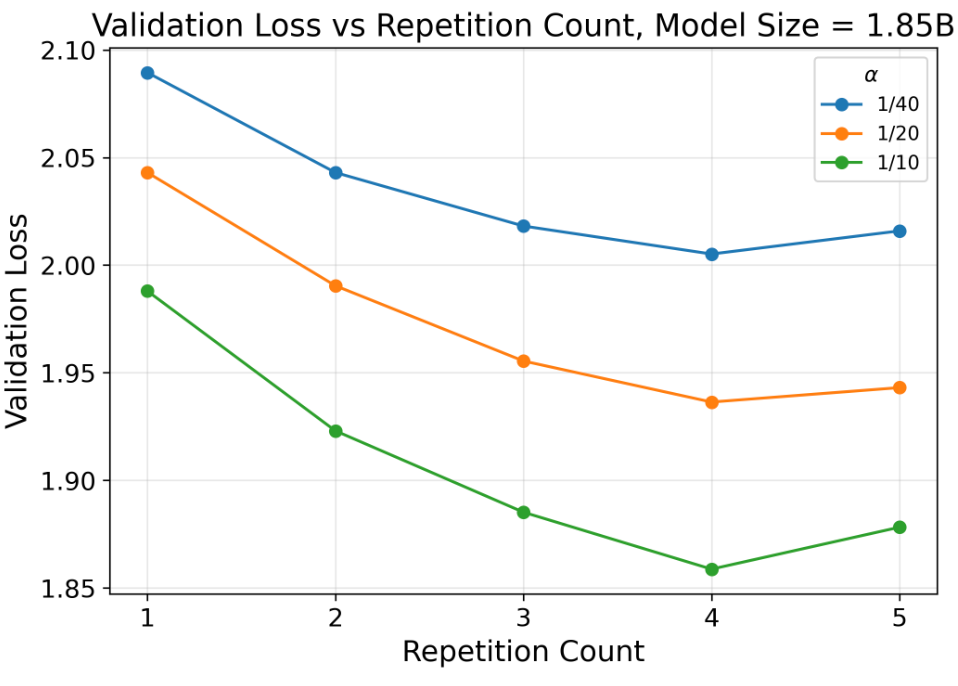} \\
\end{tabular}
\caption{
\textbf{Final validation loss across repetition counts for Medical.}
Medical generally reaches its minimum validation loss after 3 to 4 repetitions. As in the other domains, the optimal repetition count is largely insensitive to the unique data fraction and increases only mildly with model size.
}
\label{fig:medical_repetition}
\vspace{-0.3cm}
\end{figure*}

\FloatBarrier

%% file: appendix/experiments_5.tex
\section{Additional Results on Section~\ref{sec:fixed_concentration}}
\label{app:sec5}
In this section, we present additional results of Section~\ref{sec:fixed_concentration}.

Figure~\ref{fig:fixed_concentration_appendix} shows the results under the same fixed-token-budget setup as in the main text. For each curve, the total fraction of high-quality training tokens is fixed, while increasing the repetition count reduces the amount of unique high-quality data. This directly compares repeated data with an equivalent number of fresh unique tokens.

The results further show that the effectiveness of repetition differs across domains. Code and Medical behave similarly to Wikipedia, with validation loss increasing more noticeably as more unique data is replaced by repeated data. These results further support that whether repeated data can match fresh data under a fixed token budget depends strongly on the data domain.

\begin{figure*}[h]
\centering
\includegraphics[width=0.972\textwidth]{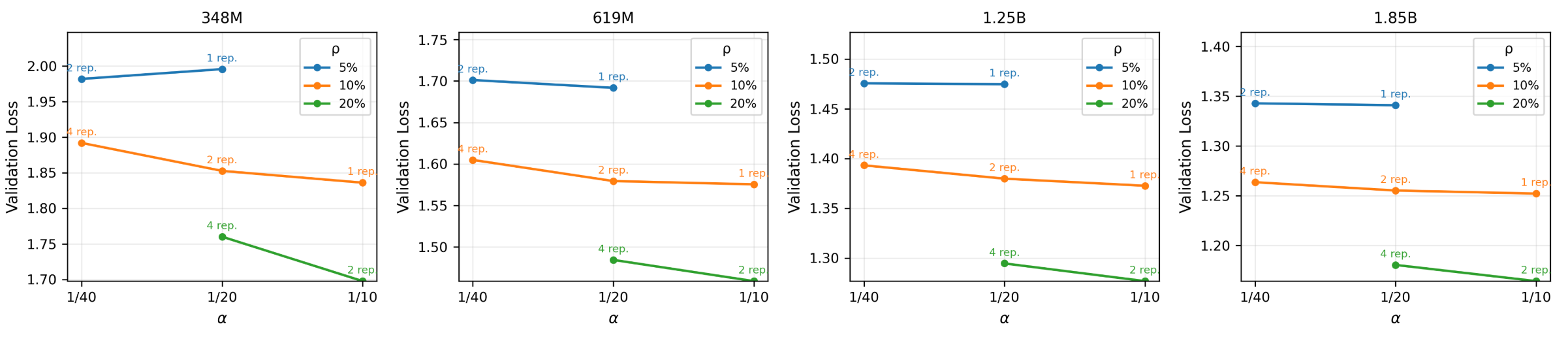} \\[4pt]
\includegraphics[width=0.972\textwidth]{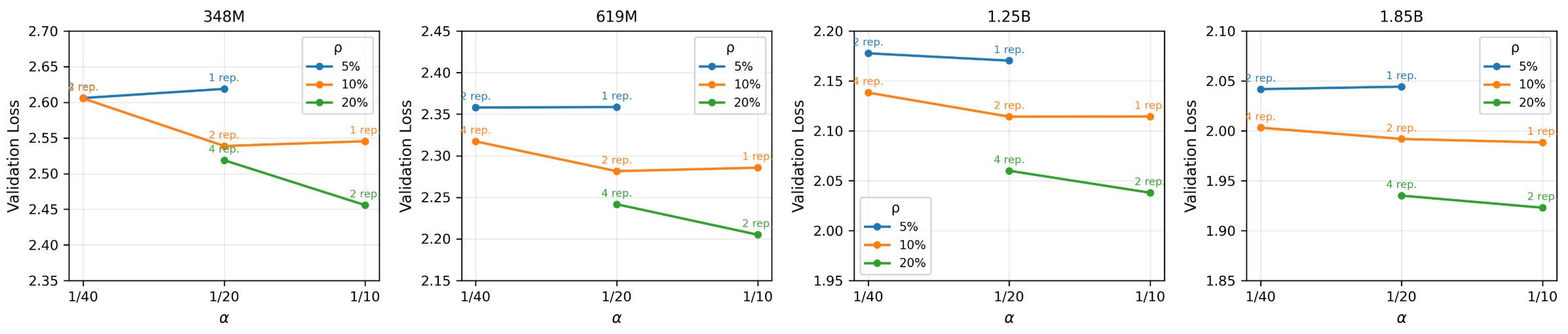}
\caption{
\textbf{Additional results at fixed total high-quality data fractions.}
Top: Code. Bottom: Medical. Code and Medical behave similarly to Wikipedia and shows a clearer degradation as more unique data is replaced by repeated data.
}
\label{fig:fixed_concentration_appendix}
\vspace{-0.3cm}
\end{figure*}

%% file: appendix/proof.tex
\newpage
\section{Proofs}
In this section, we provide the proofs for the results in Section~\ref{sec:theorem}.
\label{app:proof}

\subsection{Proof of Theorem \ref{thm:noise-decay}}

\begin{proof}[Proof of Theorem \ref{thm:noise-decay}]
For coordinate $k$, condition on the count $M_k=m$ of samples $\x_i=e_k$, where $M_k\sim\mathrm{Binomial}(D,p_k)$ with probability $\pi_{k,m}$. The exact averaged population risk is
\[
    2\pop_{D,N}(r;\beta,\sigma)
    =
    \sum_{k>N}k^{-\beta}
    +
    \sum_{k=1}^{N}\sum_{m=0}^{D}\pi_{k,m}
    \left[
        k^{-\beta}a_m^{2r}
        +
        p_k\frac{\sigma^2}{m}(1-a_m^r)^2
    \right],
\]
where $a_m=1-\eta m/D$, and the variance term is defined to be $0$ when $m=0$.

Let $F(r):=2\pop_{D,N}(r;\beta,\sigma_1)$ and $G(r):=2\pop_{D,N}(r;\beta,\sigma_2)$. Increasing the noise variance from $\sigma_1^2$ to $\sigma_2^2$ gives $G(r)=F(r)+\Delta(r)$, where
\[
    \Delta(r)
    =
    (\sigma_2^2-\sigma_1^2)
    \sum_{k=1}^{N}\sum_{m=1}^{D}
    \pi_{k,m}\frac{p_k}{m}(1-a_m^r)^2.
\]
The function $\Delta(r)$ is non-decreasing in $r$, and it is strictly increasing when $\sigma_2^2>\sigma_1^2$. Let $r_F$ and $r_G$ be the earliest minimizers of $F$ and $G$, respectively. Suppose that $r_G>r_F$. By the optimality of $r_F$ and the monotonicity of $\Delta$,
\[
    G(r_G)
    =F(r_G)+\Delta(r_G)
    \ge F(r_F)+\Delta(r_F)
    =G(r_F).
\]
If the inequality is strict, it contradicts the optimality of $r_G$. If equality holds, it contradicts $r_G$ being the earliest minimizer. Therefore, $r_G\le r_F$.

It remains to derive the small-noise rate. Write $\tau=\sigma^2$ and
\[
    H_\tau(r):=2\pop_{D,N}(r;\beta,\sigma).
\]
Its forward difference is
\[
\begin{aligned}
    H_\tau(r+1)-H_\tau(r)
    ={}&
    \sum_{k=1}^{N}\sum_{m=1}^{D}\pi_{k,m}
    (1-a_m)a_m^r \\
    &\quad\cdot
    \left[
        2p_k\frac{\tau}{m}
        -(1+a_m)
        \left(k^{-\beta}+p_k\frac{\tau}{m}\right)a_m^r
    \right].
\end{aligned}
\]
Let $a=a_1=1-\eta/D$. The contribution from $m=1$ can be written as $A\tau a^r-B_\tau a^{2r},$
where
\[
    A=2(1-a)\sum_{k=1}^{N}\pi_{k,1}p_k>0,
    \qquad
    B_\tau=(1-a)(1+a)
    \sum_{k=1}^{N}\pi_{k,1}(k^{-\beta}+p_k\tau)>0.
\]
If $D\ge2$, let $a_2=1-2\eta/D<a$; if $D=1$, the remainder below is identically zero. To evaluate $H_{\tau}(r+1)-H_{\tau}(r)$, we split it as follows:
$$H_{\tau}(r+1)-H_{\tau}(r)=A\tau a^r-B_{\tau}a^{2r}+\underbrace{\sum_{k=1}^{N}\sum_{m=2}^{D}\pi_{k,m}
    (1-a_m)a_m^r 
    \cdot
    \left[
        2p_k\frac{\tau}{m}
        -(1+a_m)
        \left(k^{-\beta}+p_k\frac{\tau}{m}\right)a_m^r
    \right]}_{:=E_{\tau}(r)}.$$

Since $D$ and $N$ are fixed, the contribution $E_\tau(r)$ from $m\ge2$ satisfies, uniformly for $0<\tau\le1$, $|E_\tau(r)|
    \le
    C\left(\tau a_2^r+a_2^{2r}\right)$
for a constant $C$ independent of $r$ and $\tau$.

By the optimality of $r_\tau$,
\[
    H_\tau(r_\tau+1)-H_\tau(r_\tau)\ge0,
    \qquad
    H_\tau(r_\tau)-H_\tau(r_\tau-1)\le0.
\]
Suppose first that $D\ge2$, and write $\rho=a_2/a<1$. Setting $x=a^{r_\tau}$ in the first inequality gives $\left(B_\tau-C\rho^{2r_\tau}\right)x
    \le
    \left(A+C\rho^{r_\tau}\right)\tau.$
Since $r_\tau\to\infty$ and $B_\tau\to B_0>0$, it follows that $a^{r_\tau}\le C_1\tau,$
for all sufficiently small $\tau$. Similarly, setting $y=a^{r_\tau-1}$ in the second inequality gives $\left(A-C\rho^{r_\tau-1}\right)\tau
    \le
    \left(B_\tau+C\rho^{2(r_\tau-1)}\right)y,$
and hence $a^{r_\tau}\ge C_2\tau$
for a constant $C_2>0$ independent of $\tau$. When $D=1$, the same two bounds follow directly because $E_\tau(r)=0$. Therefore, $a^{r_\tau}=\Theta(\tau).$
Taking logarithms and recalling that $\tau=\sigma^2$ yields
\[
    r^*(D,N;\beta,\sigma)
    =
    \frac{\log(1/\sigma^2)}{-\log(1-\eta/D)}+O(1).
\]
\end{proof}

\subsection{Proof of Theorem \ref{thm:model-size-stopping}}

\begin{proof}[Proof of Theorem \ref{thm:model-size-stopping}]
Let $\Delta_N(r):=2\pop_{D,N+1}(r)-2\pop_{D,N}(r)$ and set $j=N+1$. The exact risk increment is $\Delta_N(r)
    =
    \sum_{m=1}^{D}\pi_{j,m}h_{j,m}(r),$
where $h_{j,m}(r)
    =
    j^{-\beta}(a_m^{2r}-1)
    +
    c_\alpha j^{-\alpha}\frac{\sigma^2}{m}(1-a_m^r)^2.$
The forward difference is
\[
\begin{aligned}
    h_{j,m}(r+1)-h_{j,m}(r)
    ={}&(1-a_m)a_m^r \\
    &\quad\cdot
    \left[
        2c_\alpha j^{-\alpha}\frac{\sigma^2}{m}
        -(1+a_m)
        \left(j^{-\beta}+c_\alpha j^{-\alpha}\frac{\sigma^2}{m}\right)a_m^r
    \right].
\end{aligned}
\]
The bracketed term is minimized at $r=0$. Thus, $h_{j,m}(r+1)-h_{j,m}(r)\ge0$ for all $r\ge0$ provided that
\[
    c_\alpha j^{-\alpha}\frac{\sigma^2}{m}(1-a_m)
    \ge
    j^{-\beta}(1+a_m).
\]
Using $1-a_m=\eta m/D$ and $1+a_m\le2-\eta/D$ for $m\ge1$, a sufficient condition is
\[
    j^{\beta-\alpha}
    \ge
    \frac{D(2-\eta/D)}{c_\alpha\eta\sigma^2}.
\]
When $N>N_0(D,\sigma^2)$, this condition holds for $j=N+1$, implying that $\Delta_N(r)$ is non-decreasing in $r$.

Let $F(r):=2\pop_{D,N}(r)$ and $G(r):=2\pop_{D,N+1}(r)=F(r)+\Delta_N(r)$, with respective earliest minimizers $r_F$ and $r_G$. Suppose that $r_G>r_F$. By the optimality of $r_F$ and the monotonicity of $\Delta_N$,
\[
    G(r_G)
    =F(r_G)+\Delta_N(r_G)
    \ge F(r_F)+\Delta_N(r_F)
    =G(r_F).
\]
As in the previous proof, a strict inequality contradicts the optimality of $r_G$, while equality contradicts the earliest-minimizer convention. Therefore, $r_G\le r_F$.

The stated crossover order follows directly from
\[
    N_0(D,\sigma^2)
    =
    \left(
        \frac{2D-\eta}{c_\alpha\eta\sigma^2}
    \right)^{1/(\beta-\alpha)}.
\]
\end{proof}

\subsection{Proof of Theorem \ref{thm:linear-scaling}}

\begin{proof}[Proof of Theorem \ref{thm:linear-scaling}]
For coordinate $k$, let $M_k\sim\mathrm{Binomial}(D,p_k)$ and $a_m=1-\eta m/D$. From the exact risk formula, we have
\[
\begin{aligned}
2\pop_{D,N}(r+1)-2\pop_{D,N}(r)
={}&
-\underbrace{
\frac{\eta}{D}
\sum_{k=1}^{N}k^{-\beta}
\E\!\left[
    M_k(1+a_{M_k})a_{M_k}^{2r}
\right]
}_{S_D(r)}
\\
&+
\underbrace{
\frac{\eta\sigma^2}{D}
\sum_{k=1}^{N}p_k
\E\!\left[
    \left(
        2a_{M_k}^r-(1+a_{M_k})a_{M_k}^{2r}
    \right)
    \mathbf{1}_{\{M_k\ge1\}}
\right]
}_{V_D(r)}.
\end{aligned}
\]
Both $S_D(r)$ and $V_D(r)$ are nonnegative. The first term is the marginal signal benefit, whereas the second is the marginal variance penalty.

We first consider $1\ll r\le D$. By $\exp\left(-\frac{\eta rm}{D(1-\eta)}\right)
    \le a_m^r
    \le
    \exp\left(-\frac{\eta rm}{D}\right),$
 $\E[z^{M_k}]=(1-p_k+p_kz)^D \text{ and }
    \E[M_kz^{M_k}]
    =Dp_kz(1-p_k+p_kz)^{D-1}$
 the dominant coordinates satisfy $p_kr\asymp1$, or equivalently $k\asymp r^{1/\alpha}$. Standard sum--integral comparison then gives, uniformly in this range,
\[
    S_D(r)
    \asymp
    \sum_{k=1}^{N}k^{-(\alpha+\beta)}
    \exp(-c r k^{-\alpha})
    \asymp
    r^{\frac{1-\alpha-\beta}{\alpha}},
\]
and
\[
    V_D(r)
    \asymp
    \frac{\sigma^2}{D}
    \sum_{k=1}^{N}k^{-\alpha}
    \exp(-c r k^{-\alpha})
    \left(1-\exp(-c r k^{-\alpha})\right)
    \asymp
    \frac{\sigma^2}{D}
    r^{\frac{1-\alpha}{\alpha}}.
\]
Here and below, the positive constants denoted by $c$ may differ from line to line. Since $D/N\to C_0$ and $r\le D$, truncating the sums at $N$ does not change these orders. Therefore, $\frac{S_D(r)}{V_D(r)}
    \asymp
    \frac{D}{\sigma^2}r^{-\beta/\alpha}.$
The same estimates with one-sided constants also cover bounded $r$. Hence, there exist constants $0<c_0<C_0'<\infty$, independent of $D$, such that, for all sufficiently large $D$, $S_D(r)>V_D(r),
    0\le r\le c_0D^{\alpha/\beta},$  and $
    S_D(r)<V_D(r),
    C_0'D^{\alpha/\beta}\le r\le D.
$

It remains to exclude a later decrease when $r\ge D$. Let $a_1=1-\eta/D$. Restricting the variance sum to coordinates with $p_k\le D^{-1}$ and to the event $M_k=1$ gives $V_D(r)
    \ge
    c\sigma^2 a_1^rD^{\frac{1-2\alpha}{\alpha}}.$
Using $a_m\le a_1^m$ and the binomial identities above gives $S_D(r)
    \le
    C a_1^{2r}D^{\frac{1-\alpha-\beta}{\alpha}}.$
Consequently, $\frac{S_D(r)}{V_D(r)}
    \le
    C a_1^r\frac{D^{1-\beta/\alpha}}{\sigma^2}
    =o(1),$
where the last equality follows from $\beta>\alpha$. Thus, the exact risk is increasing for every $r\ge D$ when $D$ is sufficiently large.

We have shown that the exact risk is decreasing up to $c_0D^{\alpha/\beta}$ and increasing after $C_0'D^{\alpha/\beta}$. Therefore, its earliest global minimizer satisfies
\[
    c_0D^{\alpha/\beta}
    \le
    r^*(D,N)
    \le
    C_0'D^{\alpha/\beta},
\]
which proves
\[
    r^*(D,N)
    =
    \Theta\!\left(
        D^{\alpha/\beta}
    \right).
\]
\end{proof}

%% file: paper.bbl
\begin{thebibliography}{39}
\providecommand{\natexlab}[1]{#1}
\providecommand{\url}[1]{\texttt{#1}}
\expandafter\ifx\csname urlstyle\endcsname\relax
  \providecommand{\doi}[1]{doi: #1}\else
  \providecommand{\doi}{doi: \begingroup \urlstyle{rm}\Url}\fi

\bibitem[Bi et~al.(2024)Bi, Chen, Chen, Chen, Dai, Deng, Ding, Dong, Du, Fu, et~al.]{bi2024deepseek}
Xiao Bi, Deli Chen, Guanting Chen, Shanhuang Chen, Damai Dai, Chengqi Deng, Honghui Ding, Kai Dong, Qiushi Du, Zhe Fu, et~al.
\newblock Deepseek llm: Scaling open-source language models with longtermism.
\newblock \emph{arXiv preprint arXiv:2401.02954}, 2024.

\bibitem[Carlini et~al.(2022)Carlini, Ippolito, Jagielski, Lee, Tramer, and Zhang]{carlini2022quantifying}
Nicholas Carlini, Daphne Ippolito, Matthew Jagielski, Katherine Lee, Florian Tramer, and Chiyuan Zhang.
\newblock Quantifying memorization across neural language models.
\newblock In \emph{The Eleventh International Conference on Learning Representations}, 2022.

\bibitem[Charton and Kempe(2024)]{charton2024emergent}
Fran{\c{c}}ois Charton and Julia Kempe.
\newblock Emergent properties with repeated examples.
\newblock \emph{arXiv preprint arXiv:2410.07041}, 2024.

\bibitem[Chudnovsky et~al.(2026)Chudnovsky, Kazdan, Levi, Schaeffer, Denisov-Blanch, He, Donmez, Koyejo, and Donoho]{chudnovsky2026internal}
Jessica Chudnovsky, Joshua Kazdan, Noam Levi, Rylan Schaeffer, Yegor Denisov-Blanch, Bo~He, Mehmet Donmez, Sanmi Koyejo, and David Donoho.
\newblock Internal data repetition destroys language models.
\newblock \emph{arXiv preprint arXiv:2606.24998}, 2026.

\bibitem[Dai and Zheng(2026)]{dai2026explaining}
Rui Dai and Shuran Zheng.
\newblock Explaining data mixing scaling laws.
\newblock In \emph{Forty-third International Conference on Machine Learning}, 2026.
\newblock URL \url{https://openreview.net/forum?id=joReaAnwnH}.

\bibitem[Diao et~al.(2025)Diao, Yang, Fu, Dong, Su, Kliegl, Chen, Belcak, Suhara, Yin, et~al.]{diao2025climb}
Shizhe Diao, Yu~Yang, Yonggan Fu, Xin Dong, Dan Su, Markus Kliegl, Zijia Chen, Peter Belcak, Yoshi Suhara, Hongxu Yin, et~al.
\newblock Climb: Clustering-based iterative data mixture bootstrapping for language model pre-training.
\newblock \emph{arXiv preprint arXiv:2504.13161}, 2025.

\bibitem[Doddapaneni et~al.(2025)Doddapaneni, Ramesh, Khapra, Kunchukuttan, and Kumar]{doddapaneni2025primer}
Sumanth Doddapaneni, Gowtham Ramesh, Mitesh Khapra, Anoop Kunchukuttan, and Pratyush Kumar.
\newblock A primer on pretrained multilingual language models.
\newblock \emph{ACM Computing Surveys}, 57\penalty0 (9):\penalty0 1--39, 2025.

\bibitem[Dubey et~al.(2024)Dubey, Jauhri, Pandey, Kadian, Al-Dahle, Letman, Mathur, Schelten, Yang, Fan, et~al.]{dubey2024llama}
Abhimanyu Dubey, Abhinav Jauhri, Abhinav Pandey, Abhishek Kadian, Ahmad Al-Dahle, Aiesha Letman, Akhil Mathur, Alan Schelten, Amy Yang, Angela Fan, et~al.
\newblock The llama 3 herd of models.
\newblock \emph{arXiv e-prints}, pages arXiv--2407, 2024.

\bibitem[Fan et~al.(2024)Fan, Pagliardini, and Jaggi]{fan2024doge}
Simin Fan, Matteo Pagliardini, and Martin Jaggi.
\newblock Doge: Domain reweighting with generalization estimation.
\newblock In \emph{International Conference on Machine Learning}, pages 12895--12915. PMLR, 2024.

\bibitem[Gu et~al.(2025)Gu, Lyu, Li, and Zhang]{gu2025datamixinginducephase}
Xinran Gu, Kaifeng Lyu, Jiazheng Li, and Jingzhao Zhang.
\newblock Data mixing can induce phase transitions in knowledge acquisition.
\newblock \emph{arXiv preprint arXiv:2505.18091}, 2025.

\bibitem[Guo et~al.(2020)Guo, Dai, Vrande{\v{c}}i{\'c}, and Al-Rfou]{guo2020wiki}
Mandy Guo, Zihang Dai, Denny Vrande{\v{c}}i{\'c}, and Rami Al-Rfou.
\newblock Wiki-40b: Multilingual language model dataset.
\newblock In \emph{Proceedings of the Twelfth Language Resources and Evaluation Conference}, pages 2440--2452, 2020.

\bibitem[Hernandez et~al.(2022)Hernandez, Brown, Conerly, DasSarma, Drain, El-Showk, Elhage, Hatfield-Dodds, Henighan, Hume, et~al.]{hernandez2022scaling}
Danny Hernandez, Tom Brown, Tom Conerly, Nova DasSarma, Dawn Drain, Sheer El-Showk, Nelson Elhage, Zac Hatfield-Dodds, Tom Henighan, Tristan Hume, et~al.
\newblock Scaling laws and interpretability of learning from repeated data.
\newblock \emph{arXiv preprint arXiv:2205.10487}, 2022.

\bibitem[Hoffmann et~al.(2022)Hoffmann, Borgeaud, Mensch, Buchatskaya, Cai, Rutherford, de~Las~Casas, Hendricks, Welbl, Clark, et~al.]{hoffmann2022training}
Jordan Hoffmann, Sebastian Borgeaud, Arthur Mensch, Elena Buchatskaya, Trevor Cai, Eliza Rutherford, Diego de~Las~Casas, Lisa~Anne Hendricks, Johannes Welbl, Aidan Clark, et~al.
\newblock Training compute-optimal large language models.
\newblock In \emph{Proceedings of the 36th International Conference on Neural Information Processing Systems}, pages 30016--30030, 2022.

\bibitem[Kalra and Barkeshli(2026)]{kalra2026quantifying}
Dayal~Singh Kalra and Maissam Barkeshli.
\newblock Quantifying hyperparameter transfer and the importance of embedding layer learning rate.
\newblock \emph{arXiv preprint arXiv:2605.21486}, 2026.
\newblock \doi{10.48550/arXiv.2605.21486}.

\bibitem[Kang et~al.(2024)Kang, Sun, Wen, Chen, Song, Mahmood, and Jia]{kang2024autoscale}
Feiyang Kang, Yifan Sun, Bingbing Wen, Si~Chen, Dawn Song, Rafid Mahmood, and Ruoxi Jia.
\newblock Autoscale: Scale-aware data mixing for pre-training llms.
\newblock \emph{arXiv preprint arXiv:2407.20177}, 2024.

\bibitem[Kaplan et~al.(2020)Kaplan, McCandlish, Henighan, Brown, Chess, Child, Gray, Radford, Wu, and Amodei]{kaplan2020scaling}
Jared Kaplan, Sam McCandlish, Tom Henighan, Tom~B Brown, Benjamin Chess, Rewon Child, Scott Gray, Alec Radford, Jeffrey Wu, and Dario Amodei.
\newblock Scaling laws for neural language models.
\newblock \emph{arXiv preprint arXiv:2001.08361}, 2020.

\bibitem[Kazdan et~al.(2026)Kazdan, Levi, Schaeffer, Chudnovsky, Puri, He, Donmez, Koyejo, and Donoho]{kazdan2026scale}
Joshua Kazdan, Noam Levi, Rylan Schaeffer, Jessica Chudnovsky, Abhay Puri, Bo~He, Mehmet Donmez, Sanmi Koyejo, and David Donoho.
\newblock Scale dependent data duplication.
\newblock \emph{arXiv preprint arXiv:2603.06603}, 2026.

\bibitem[Lee et~al.(2022)Lee, Ippolito, Nystrom, Zhang, Eck, Callison-Burch, and Carlini]{lee2022deduplicating}
Katherine Lee, Daphne Ippolito, Andrew Nystrom, Chiyuan Zhang, Douglas Eck, Chris Callison-Burch, and Nicholas Carlini.
\newblock Deduplicating training data makes language models better.
\newblock In \emph{Proceedings of the 60th Annual Meeting of the Association for Computational Linguistics (Volume 1: Long Papers)}, pages 8424--8445, 2022.

\bibitem[Li et~al.(2025)Li, Chen, Huang, Wang, and Wu]{li2025functional}
Binghui Li, Fengling Chen, Zixun Huang, Lean Wang, and Lei Wu.
\newblock Functional scaling laws in kernel regression: Loss dynamics and learning rate schedules.
\newblock In \emph{The Thirty-ninth Annual Conference on Neural Information Processing Systems}, 2025.
\newblock URL \url{https://openreview.net/forum?id=dpllevHMbc}.

\bibitem[Li et~al.(2024)Li, Fang, Smyrnis, Ivgi, Jordan, Gadre, Bansal, Guha, Keh, Arora, et~al.]{li2024datacomp}
Jeffrey Li, Alex Fang, Georgios Smyrnis, Maor Ivgi, Matt Jordan, Samir Gadre, Hritik Bansal, Etash Guha, Sedrick Keh, Kushal Arora, et~al.
\newblock Datacomp-lm: In search of the next generation of training sets for language models.
\newblock \emph{Advances in Neural Information Processing Systems}, 37:\penalty0 14200--14282, 2024.

\bibitem[Li et~al.(2023)Li, Allal, Zi, Muennighoff, Kocetkov, Mou, Marone, Akiki, Li, Chim, et~al.]{li2023starcoder}
R~Li, LB~Allal, Y~Zi, N~Muennighoff, D~Kocetkov, C~Mou, M~Marone, C~Akiki, J~Li, J~Chim, et~al.
\newblock Starcoder: May the source be with you!
\newblock \emph{Transactions on machine learning research}, 2023.

\bibitem[Lin et~al.(2024)Lin, Wu, Kakade, Bartlett, and Lee]{lin2024scaling}
Licong Lin, Jingfeng Wu, Sham~M. Kakade, Peter Bartlett, and Jason~D. Lee.
\newblock Scaling laws in linear regression: Compute, parameters, and data.
\newblock In \emph{The Thirty-eighth Annual Conference on Neural Information Processing Systems}, 2024.
\newblock URL \url{https://openreview.net/forum?id=PH7sdEanXP}.

\bibitem[Liu et~al.(2026)Liu, Zhou, Liu, Guo, Wang, Zhang, Zhang, Yu, Zhou, and Wang]{liu2026infolaw}
Fengze Liu, Weidong Zhou, Binbin Liu, Ping Guo, Zijun Wang, Bingni Zhang, Yifan Zhang, Yifeng Yu, Xiaohuan Zhou, and Taifeng Wang.
\newblock Infolaw: Information scaling laws for large language models with quality-weighted mixture data and repetition.
\newblock \emph{arXiv preprint arXiv:2605.02364}, 2026.

\bibitem[Liu et~al.(2025)Liu, Su, Yao, Jiang, Lai, Du, Qin, Xu, Lu, Yan, et~al.]{liu2025muon}
Jingyuan Liu, Jianlin Su, Xingcheng Yao, Zhejun Jiang, Guokun Lai, Yulun Du, Yidao Qin, Weixin Xu, Enzhe Lu, Junjie Yan, et~al.
\newblock Muon is scalable for llm training.
\newblock \emph{arXiv preprint arXiv:2502.16982}, 2025.

\bibitem[Liu et~al.(2024)Liu, Zheng, Muennighoff, Zeng, Dou, Pang, Jiang, and Lin]{liu2024regmix}
Qian Liu, Xiaosen Zheng, Niklas Muennighoff, Guangtao Zeng, Longxu Dou, Tianyu Pang, Jing Jiang, and Min Lin.
\newblock Regmix: Data mixture as regression for language model pre-training.
\newblock In \emph{The Thirteenth International Conference on Learning Representations}, 2024.

\bibitem[Lovelace et~al.(2026)Lovelace, Belardi, Kundurthy, Sudhakar, and Weinberger]{lovelace2026prescriptive}
Justin Lovelace, Christian Belardi, Srivatsa Kundurthy, Shriya Sudhakar, and Kilian~Q Weinberger.
\newblock Prescriptive scaling laws for data constrained training.
\newblock \emph{arXiv preprint arXiv:2605.01640}, 2026.

\bibitem[Muennighoff et~al.(2023)Muennighoff, Rush, Barak, Le~Scao, Tazi, Piktus, Pyysalo, Wolf, and Raffel]{muennighoff2023scaling}
Niklas Muennighoff, Alexander Rush, Boaz Barak, Teven Le~Scao, Nouamane Tazi, Aleksandra Piktus, Sampo Pyysalo, Thomas Wolf, and Colin~A Raffel.
\newblock Scaling data-constrained language models.
\newblock \emph{Advances in Neural Information Processing Systems}, 36:\penalty0 50358--50376, 2023.

\bibitem[Paster et~al.(2024)Paster, Dos~Santos, Azerbayev, and Ba]{paster2024openwebmath}
Keiran Paster, Marco Dos~Santos, Zhangir Azerbayev, and Jimmy Ba.
\newblock Openwebmath: An open dataset of high-quality mathematical web text.
\newblock In \emph{International Conference on Learning Representations}, volume 2024, pages 20357--20379, 2024.

\bibitem[Penedo et~al.(2024)Penedo, Kydl{\'\i}{\v{c}}ek, Lozhkov, Mitchell, Raffel, Von~Werra, Wolf, et~al.]{penedo2024fineweb}
Guilherme Penedo, Hynek Kydl{\'\i}{\v{c}}ek, Anton Lozhkov, Margaret Mitchell, Colin Raffel, Leandro Von~Werra, Thomas Wolf, et~al.
\newblock The fineweb datasets: Decanting the web for the finest text data at scale.
\newblock \emph{Advances in Neural Information Processing Systems}, 37:\penalty0 30811--30849, 2024.

\bibitem[Raffel et~al.(2020)Raffel, Shazeer, Roberts, Lee, Narang, Matena, Zhou, Li, and Liu]{raffel2020exploring}
Colin Raffel, Noam Shazeer, Adam Roberts, Katherine Lee, Sharan Narang, Michael Matena, Yanqi Zhou, Wei Li, and Peter~J Liu.
\newblock Exploring the limits of transfer learning with a unified text-to-text transformer.
\newblock \emph{Journal of machine learning research}, 21\penalty0 (140):\penalty0 1--67, 2020.

\bibitem[Sedova et~al.(2026)Sedova, Seto, Schluter, and Ablin]{sedova2026scaling}
Anastasiia Sedova, Skyler Seto, Natalie Schluter, and Pierre Ablin.
\newblock Scaling laws for mixture pretraining under data constraints.
\newblock \emph{arXiv preprint arXiv:2605.12715}, 2026.

\bibitem[Shukor et~al.(2025)Shukor, Bethune, Busbridge, Grangier, Fini, El-Nouby, and Ablin]{shukor2025scalinglawsoptimaldata}
Mustafa Shukor, Louis Bethune, Dan Busbridge, David Grangier, Enrico Fini, Alaaeldin El-Nouby, and Pierre Ablin.
\newblock Scaling laws for optimal data mixtures.
\newblock \emph{arXiv preprint arXiv:2507.09404}, 2025.

\bibitem[Taylor et~al.(2022)Taylor, Kardas, Cucurull, Scialom, Hartshorn, Saravia, Poulton, Kerkez, and Stojnic]{taylor2022galactica}
Ross Taylor, Marcin Kardas, Guillem Cucurull, Thomas Scialom, Anthony Hartshorn, Elvis Saravia, Andrew Poulton, Viktor Kerkez, and Robert Stojnic.
\newblock Galactica: A large language model for science.
\newblock \emph{arXiv preprint arXiv:2211.09085}, 2022.

\bibitem[Team et~al.(2023)Team, Anil, Borgeaud, Alayrac, Yu, Soricut, Schalkwyk, Dai, Hauth, Millican, et~al.]{team2023gemini}
Gemini Team, Rohan Anil, Sebastian Borgeaud, Jean-Baptiste Alayrac, Jiahui Yu, Radu Soricut, Johan Schalkwyk, Andrew~M Dai, Anja Hauth, Katie Millican, et~al.
\newblock Gemini: a family of highly capable multimodal models.
\newblock \emph{arXiv preprint arXiv:2312.11805}, 2023.

\bibitem[Wu et~al.(2026)Wu, Xiao, Okanovic, Sternal, van Keulen, Pechenizkiy, Mocanu, Hoefler, and Mocanu]{wu2026data}
Boqian Wu, Qiao Xiao, Patrik Okanovic, Tomasz Sternal, Maurice van Keulen, Mykola Pechenizkiy, Elena Mocanu, Torsten Hoefler, and Decebal~Constantin Mocanu.
\newblock When data is scarce: Scaling sparse language models with repeated training.
\newblock \emph{arXiv preprint arXiv:2606.01155}, 2026.

\bibitem[Xie et~al.(2023)Xie, Pham, Dong, Du, Liu, Lu, Liang, Le, Ma, and Yu]{xie2023doremi}
Sang~Michael Xie, Hieu Pham, Xuanyi Dong, Nan Du, Hanxiao Liu, Yifeng Lu, Percy~S Liang, Quoc~V Le, Tengyu Ma, and Adams~Wei Yu.
\newblock Doremi: Optimizing data mixtures speeds up language model pretraining.
\newblock \emph{Advances in Neural Information Processing Systems}, 36:\penalty0 69798--69818, 2023.

\bibitem[Xue et~al.(2023)Xue, Fu, Zhou, Zheng, and You]{xue2023repeat}
Fuzhao Xue, Yao Fu, Wangchunshu Zhou, Zangwei Zheng, and Yang You.
\newblock To repeat or not to repeat: Insights from scaling llm under token-crisis.
\newblock \emph{Advances in Neural Information Processing Systems}, 36:\penalty0 59304--59322, 2023.

\bibitem[Yan et~al.(2025)Yan, Wen, Li, Luo, Chen, and Lyu]{yan2025larger}
Tingkai Yan, Haodong Wen, Binghui Li, Kairong Luo, Wenguang Chen, and Kaifeng Lyu.
\newblock Larger datasets can be repeated more: A theoretical analysis of multi-epoch scaling in linear regression.
\newblock \emph{arXiv preprint arXiv:2511.13421}, 2025.

\bibitem[Ye et~al.(2025)Ye, Liu, Sun, Zhan, Zhou, and Qiu]{ye2025data}
Jiasheng Ye, Peiju Liu, Tianxiang Sun, Jun Zhan, Yunhua Zhou, and Xipeng Qiu.
\newblock Data mixing laws: Optimizing data mixtures by predicting language modeling performance.
\newblock In \emph{The Thirteenth International Conference on Learning Representations}, 2025.

\end{thebibliography}
